\newtheorem{theorem}{Theorem}[section]
\newtheorem{lemma}[theorem]{Lemma}
\newtheorem{proposition}[theorem]{Proposition}
\theoremstyle{definition}
\newtheorem{definition}[theorem]{Definition}
\newtheorem{example}[theorem]{Example}
\newcommand{\SwD}{\triangleright}
\newcommand{\notSwD}{\ntriangleright}
\newcommand{\myvec}[1]{\boldsymbol{\mathbf{#1}}}
\def\X{\mathcal{X}}
\def\S{\mathcal{S}}
\title{A Voting-Based System for Ethical Decision Making}
\author[1]{Ritesh Noothigattu}
\author[2]{Snehalkumar `Neil' S. Gaikwad}
\author[2]{Edmond Awad}
\author[2]{Sohan Dsouza}
\author[2]{Iyad Rahwan}
\author[1]{Pradeep Ravikumar}
\author[1]{Ariel D. Procaccia}
\affil[1]{School of Computer Science, Carnegie Mellon University}
\affil[2]{The Media Lab, Massachusetts Institute of Technology}
\date{}
\begin{document}

\maketitle

\begin{abstract}
We present a general approach to automating ethical decisions, drawing on machine learning and computational social choice. In a nutshell, we propose to \emph{learn} a model of societal preferences, and, when faced with a specific ethical dilemma at runtime, efficiently \emph{aggregate} those preferences to identify a desirable choice. We provide a concrete algorithm that instantiates our approach; some of its crucial steps are informed by a new theory of \emph{swap-dominance efficient} voting rules. Finally, we implement and evaluate a system for ethical decision making in the autonomous vehicle domain, using preference data collected from 1.3 million people through the Moral Machine website. 
\end{abstract}

\section{Introduction}\label{sec:intro}

The problem of ethical decision making, which has long been a grand challenge for AI~\cite{WA08}, has recently caught the public imagination. Perhaps its best-known manifestation is a modern variant of the classic \emph{trolley problem}~\cite{Jar85}: An autonomous vehicle has a brake failure, leading to an accident with inevitably tragic consequences; due to the vehicle's superior perception and computation capabilities, it can make an informed decision. Should it stay its course and hit a wall, killing its three passengers, one of whom is a young girl? Or swerve and kill a male athlete and his dog, who are crossing the street on a red light? A notable paper by \citet{BSR16} has shed some light on how people address such questions, and even former US President Barack Obama has weighed in.\footnote{{\fontfamily{cmvtt}\selectfont https://www.wired.com/2016/10/president-obama-mit-joi-ito-interview/}} 

Arguably the main obstacle to automating ethical decisions is the lack of a formal specification of ground-truth \emph{ethical principles}, which have been the subject of debate for centuries among ethicists and moral philosophers~\cite{Rawls71,Will86}.  In their work on fairness in machine learning, \citet{DHPR+12} concede that, when ground-truth ethical principles are not available, we must use an ``approximation as agreed upon by society.'' But how can society agree on the ground truth\,---\,or an approximation thereof\,---\,when even ethicists cannot? 

We submit that decision making can, in fact, be automated, even in the absence of such ground-truth principles, by aggregating people's opinions on ethical dilemmas. This view is foreshadowed by recent position papers by \citet{GRTV16} and \citet{CSSD+17}, who suggest that the field of \emph{computational social choice}~\cite{BCEL+16}, which deals with algorithms for aggregating individual preferences towards collective decisions, may provide tools for ethical decision making. In particular, Conitzer et al.~raise the possibility of ``letting our \emph{models} of multiple people's moral values \emph{vote} over the relevant alternatives.''

We take these ideas a step further by proposing and implementing a concrete approach for ethical decision making based on computational social choice, which, we believe, is quite practical. In addition to serving as a foundation for incorporating future ground-truth ethical and legal principles, it could even provide crucial preliminary guidance on some of the questions faced by ethicists. Our approach consists of four steps: 

\begin{enumerate}
\item[I] \emph{Data collection:} Ask human voters to compare pairs of alternatives (say a few dozen per voter). In the autonomous vehicle domain, an alternative is determined by a vector of features such as the number of victims and their gender, age, health\,---\,even species! 

\item[II] \emph{Learning:} Use the pairwise comparisons to learn a model of the preferences of each voter over all possible alternatives. 

\item[III] \emph{Summarization:} Combine the individual models into a single model, which approximately captures the collective preferences of all voters over all possible alternatives. 
 
\item[IV] \emph{Aggregation:} At runtime, when encountering an ethical dilemma involving a specific subset of alternatives, use the summary model to deduce the preferences of all voters over this particular subset, and apply a voting rule to aggregate these preferences into a collective decision. In the autonomous vehicle domain, the selected alternative is the outcome that society (as represented by the voters whose preferences were elicited in Step~I) views as the least catastrophic among the grim options the vehicle currently faces. Note that this step is only applied when all other options have been exhausted, i.e., all technical ways of avoiding the dilemma in the first place have failed, and all legal constraints that may dictate what to do have also failed.
\end{enumerate}

For Step I, we note that it is possible to collect an adequate dataset through, say, Amazon Mechanical Turk. But we actually perform this step on a much larger scale. Indeed, we use, for the first time, a unique dataset that consists of 18,254,285 pairwise comparisons between alternatives in the autonomous vehicle domain, obtained from 1,303,778 voters, through the website Moral Machine~\cite{ADKS+18}.\footnote{{\fontfamily{cmvtt}\selectfont http://moralmachine.mit.edu}}

Subsequent steps (namely Steps II, III, and IV) rely on having a \emph{model} for preferences. There is a considerable line of work on distributions over rankings over a \emph{finite} set of alternatives. A popular class of such models is the class of \emph{random utility models}, which use random utilities for alternatives to generate rankings over the alternatives. 
We require a slightly more general notion, as we are interested in situations where the set of alternatives is infinite, and any finite subset of alternatives might be encountered (c.f.~\cite{CT12}). For example, there are uncountably many scenarios an autonomous vehicle might face, because one can choose to model some features (such as the age of, say, a passenger) as continuous, but at runtime the vehicle will face a finite number of options. We refer to these generalized models as \emph{permutation processes}. 
 
In Section~\ref{sec:aggregation}, we focus on developing a theory of aggregation of permutation processes, which is crucial for Step~IV. Specifically, we assume that societal preferences are represented as a single permutation process. Given a (finite) subset of alternatives, the permutation process induces a distribution over rankings of these alternatives. In the spirit of \emph{distributional rank aggregation}~\cite{PPR15}, we view this distribution over rankings as an \emph{anonymous preference profile}, where the probability of a ranking is the fraction of voters whose preferences are represented by that ranking. This means we can apply a voting rule in order to aggregate the preferences\,---\,but \emph{which} voting rule should we apply? And how can we compute the outcome \emph{efficiently}? These are some of the central questions in computational social choice, but we show that in our context, under rather weak assumptions on the voting rule and permutation process, they are both moot, in the sense that it is easy to identify alternatives chosen by any ``reasonable'' voting rule. In slightly more detail, we define the notion of \emph{swap dominance} between alternatives in a preference profile, and show that if the permutation process satisfies a natural property with respect to swap dominance (standard permutation processes do), and the voting rule is \emph{swap-dominance efficient} (all common voting rules are), then any alternative that swap dominates all other alternatives is an acceptable outcome. 

Armed with these theoretical developments, our task can be reduced to: learning a permutation process for each voter (Step II); summarizing these individual processes into a single permutation process that satisfies the required swap-dominance property (Step III); and using any swap-dominance efficient voting rule, which is computationally efficient given the swap-dominance property (Step IV).

In Section~\ref{sec:instantiation}, we present a concrete algorithm that instantiates our approach, for a specific permutation process, namely the Thurstone-Mosteller (TM) Process~\cite{Thur27,Mos51}, and with a specific linear parametrization of its underlying utility process in terms of the alternative features. While these simple choices have been made to illustrate the framework, we note that, in principle, the framework can be instantiated with more general and complex permutation processes. 

Finally, in Section~\ref{sec:evaluation}, we implement and evaluate our algorithm. We first present simulation results from synthetic data that validate the accuracy of its learning and summarization components. More importantly, we implement our algorithm on the aforementioned Moral Machine dataset, and empirically evaluate the resultant system for choosing among alternatives in the autonomous vehicle domain. Taken together, these results suggest that our approach, and the algorithmic instantiation thereof, provide a computationally and statistically attractive method for ethical decision making.

\section{Related Work}

To our knowledge, the first to connect computational social choice and ethical decision making are \citet{GRTV16}. In their position paper, they raise the possibility of modeling ethical principles as the preferences of a `dummy' agent that is part of a larger system, and ask whether different formalisms should be used to model individual and collective ethical principles. They also note that there is work on collective decision making subject to feasibility constraints, but ethical principles are too complex to be simply specified as a set of feasibility constraints.

A more recent position paper about ethical decision making in AI, by \citet{CSSD+17}, discusses a number of different frameworks, and, in particular, touches upon game-theoretic models, social choice, and machine learning. They point out that ``aggregating the moral views of multiple humans (through a combination of machine learning and social-choice theoretic techniques) may result in a morally better system than that of any individual human, for example because idiosyncratic moral mistakes made by individual humans are washed out in the aggregate.'' Also relevant to our work is their discussion of the representation of dilemmas by their key moral features, for the purposes of applying machine learning algorithms. 

Our paper is most closely related to parallel work by \citet{FSSD+18}, who introduce a framework for prioritizing patients in kidney exchange. Specifically, they collected preferences over 8 simplified patient types from 289 workers on Amazon Mechanical Turk, and used them to learn societal weights for these eight types. Roughly speaking, the weights are such that if a random person was asked to compare two patient types, the probability she would prefer one to the other is proportional to its weight. These weights are then used to break ties among multiple outcomes that maximize the number of matched patients (ties are broken according to the sum of weights of matched patients). In contrast to our approach, there is no explicit preference aggregation, and voting does not take place. In addition, their approach is specific to kidney exchange. Arguably the main limitation of their approach is the use of weights that induce pairwise comparison probabilities as weights that represent societal benefit from matching a patient.\footnote{For example, if, all else being equal, a young patient is preferred to an old patient with a probability of 0.9, it does not mean that the societal value of the young patient is 9 times higher than that of the old patient.} Nonetheless, the work of \citeauthor{FSSD+18} serves as another compelling proof of concept (in a different domain), providing additional evidence that ethical decisions can be automated through computational social choice and machine learning.

Finally, recall that the massive dataset we use for Step I comes from the Moral Machine website; the conference version of our paper~\cite{NGAD+18} is the first publication to use this dataset. However, the original purpose of the website was to understand how \emph{people} make ethical decisions in the autonomous vehicle domain; the results of this experiment are presented in a recently published paper~\cite{ADKS+18}. The starting point of our work was the realization that the Moral Machine dataset can be used not just to understand people, but also to automate decisions.

\section{Preliminaries}
\label{sec:perm}

Let $\X$ denote a potentially infinite set of alternatives. Given a finite subset $A\subseteq \X$, we are interested in the set $\mathcal{S}_A$ of \emph{rankings} over $A$. Such a ranking $\sigma\in \mathcal{S}_A$ can be interpreted as mapping alternatives to their positions, i.e., $\sigma(a)$ is the position of $a\in A$ (smaller is more preferred). Let $a\succ_\sigma b$ denote that $a$ is preferred to $b$ in $\sigma$, that is, $\sigma(a)<\sigma(b)$. For $\sigma\in\S_A$ and $B\subseteq A$, let $\sigma|_B$ denote the ranking $\sigma$ restricted to $B$. And for a distribution $P$ over $\S_A$ and $B\subseteq A$, define $P|_B$ in the natural way to be the restriction of $P$ to $B$, i.e., for all $\sigma'\in \S_B$, 
$$
P|_B(\sigma')=\sum_{\sigma\in\S_A:\ \sigma|_B=\sigma'} P(\sigma).  
$$

A \emph{permutation process} $\left\{\Pi(A) \,:\,A \subseteq \X, |A|\in \mathbb{N} \right\}$ is a collection of distributions over $\S_A$ for every finite subset of alternatives $A$.  We say that a permutation process is \emph{consistent} if 
$\Pi(A)|_B = \Pi(B)$ for any finite subsets of alternatives $B\subseteq A\subseteq \X$. In other words, for a consistent permutation process $\Pi$, the distribution induced by $\Pi$ over rankings of the alternatives in $B$ is nothing but the distribution obtained by marginalizing out the extra alternatives $A\setminus B$ from the distribution induced by $\Pi$ over rankings of the alternatives in $A$. This definition of consistency is closely related to the Luce Choice Axiom~\cite{Luce59}. 

A simple adaptation of folklore results~\cite{Mar95} shows that any permutation process that is consistent has a natural interpretation in terms of utilities. Specifically (and somewhat informally, to avoid introducing notation that will not be used later), given any consistent permutation process $\Pi$ over a set of alternatives $\X$ (such that $|\X|\leq \aleph_1$), there exists a stochastic process $U$ (indexed by $\X$) such that for any $A=\{x_1,\ldots,x_m\} \subseteq \X$, the probability of drawing $\sigma\in \S_A$ from $\Pi(A)$ is equal to the probability that $\text{sort}(U_{x_1}, U_{x_2},\cdots,U_{x_m})=\sigma$, where (perhaps obviously) $\text{sort}(\cdot)$ sorts the utilities in non-increasing order. We can allow ties in utilities, as long as $\text{sort}(\cdot)$ is endowed with some tie-breaking scheme, e.g., ties are broken lexicographically, which we will assume in the sequel. We refer to the stochastic process corresponding to a consistent permutation process as its \emph{utility process}, since it is semantically meaningful to obtain a permutation via sorting by utility.

As examples of natural permutation processes, we adapt the definitions of two well-known \emph{random utility models}. The (relatively minor) difference is that random utility models define a distribution over rankings over a fixed, finite subset of alternatives, whereas permutation processes define a distribution for each finite subset of alternatives, given a potentially infinite space of alternatives.

\begin{itemize}
\item \textbf{Thurstone-Mosteller (TM) Process}~\cite{Thur27,Mos51}. A Thurstone-Mosteller Process (adaptation of Thurstone’s Case V model) is a consistent permutation process, whose utility process $U$ is a Gaussian process with independent utilities and identical variances. In more detail, given a finite set of alternatives $\{x_1, x_2, \cdots, x_m\}$, the utilities $(U_{x_1}, U_{x_2}, \cdots, U_{x_m})$ are independent, and $U_{x_i} \sim \mathcal{N}(\mu_{x_i}, \frac{1}{2})$, where $\mu_{x_i}$ denotes the mode utility of alternative $x_i$.

\item \textbf{Plackett-Luce (PL) Process}~\cite{Pla75,Luce59}. A Plackett-Luce Process is a consistent permutation process with the following utility process $U$: Given a finite set of alternatives $\{x_1, x_2, \cdots, x_m\}$, the utilities $(U_{x_1}, U_{x_2}, \cdots, U_{x_m})$ are independent, and each $U_{x_i}$ has a Gumbel distribution with identical scale, i.e. $U_{x_i} \sim \mathcal{G}(\mu_{x_i}, \gamma)$, where $\mathcal{G}$ denotes the Gumbel distribution, and $\mu_{x_i}$ denotes the mode utility of alternative $x_i$. We note that \citet{CT12} consider a further Bayesian extension of the above PL process, with a Gamma process prior over the mode utility parameters.
\end{itemize}


\section{Aggregation of Permutation Processes}
\label{sec:aggregation}

In social choice theory, a \emph{preference profile} is typically defined as a collection $\myvec{\sigma}=(\sigma_1,\ldots,\sigma_N)$ of $N$ rankings over a finite set of alternatives $A$, where $\sigma_i$ represents the preferences of voter $i$. However, when the identity of voters does not play a role, we can instead talk about an \emph{anonymous preference profile} $\pi \in [0,1]^{|A|!}$, where, for each $\sigma\in \mathcal{S}_A$, $\pi(\sigma)\in [0,1]$ is the \emph{fraction} of voters whose preferences are represented by the ranking $\sigma$. Equivalently, it is the probability that a voter drawn uniformly at random from the population has the ranking $\sigma$.

How is this related to permutation processes? Given a permutation process $\Pi$ and a finite subset $A\subseteq \X$, the distribution $\Pi(A)$ over rankings of $A$ can be seen as an anonymous preference profile $\pi$, where for $\sigma\in \mathcal{S}_A$, $\pi(\sigma)$ is the probability of $\sigma$ in $\Pi(A)$. As we shall see in Section~\ref{sec:instantiation}, Step II (learning) gives us a permutation process for each voter, where $\pi(\sigma)$ represents our \emph{confidence} that the preferences of the voter over $A$ coincide with $\sigma$; and after Step III (summarization), we obtain a single permutation process that represents societal preferences.

Our focus in this section is the aggregation of anonymous preference profiles induced by a permutation process (Step IV), that is, the task of choosing the winning alternative(s). To this end, let us define an \emph{anonymous social choice correspondence (SCC)} as a function $f$ that maps any anonymous preference profile $\pi$ over any finite and nonempty subset $A\subseteq \X$ to a nonempty subset of $A$. For example, under the ubiquitous \emph{plurality} correspondence, the set of selected alternatives consists of alternatives with maximum first-place votes, i.e., $\text{arg\,max}_{a\in A} \sum_{\sigma\in \mathcal{S}_A:\, \sigma(a)=1} \pi(\sigma)$; and under the \emph{Borda count} correspondence, denoting $|A|=m$, each vote awards $m-j$ points to the alternative ranked in position $j$, that is, the set of selected alternatives is $\text{arg\,max}_{a\in A} \sum_{j=1}^{m}(m-j)\sum_{\sigma\in \mathcal{S}_A:\, \sigma(a)=j} \pi(\sigma)$. We work with social choice \emph{correspondences} instead of social choice \emph{functions}, which return a single alternative in $A$, in order to smoothly handle ties.

\subsection{Efficient Aggregation}

Our main goal in this section is to address two related challenges. First, which (anonymous) social choice correspondence should we apply? There are many well-studied options, which satisfy different social choice axioms, and, in many cases, lead to completely different outcomes on the same preference profile. Second, how can we apply it in a computationally efficient way? This is not an easy task because, in general, we would need to explicitly construct the whole anonymous preference profile $\Pi(A)$, and then apply the SCC to it. The profile $\Pi(A)$ is of size $|A|!$, and hence this approach is intractable for a large $|A|$. Moreover, in some cases (such as the TM process), even computing the probability of a single ranking may be hard. The machinery we develop below allows us to completely circumvent these obstacles. 

Since stating our general main result requires some setup, we first state a simpler instantiation of the result for the specific TM and PL permutation processes (we will directly use this instantiation in Section~\ref{sec:instantiation}). Before doing so, we recall a few classic social choice axioms. We say that an anonymous SCC $f$ is \emph{monotonic} if the following conditions hold:
\begin{enumerate}
\item If $a\in f(\pi)$, and $\pi'$ is obtained by pushing $a$ upwards in the rankings, then $a\in f(\pi')$. 
\item If $a\in f(\pi)$ and $b\notin f(\pi)$, and $\pi'$ is obtained by pushing $a$ upwards in the rankings, then $b\notin f(\pi')$. 
\end{enumerate}
In addition, an anonymous SCC is \emph{neutral} if $f(\tau(\pi))=\tau(f(\pi))$ for any anonymous preference profile $\pi$, and any permutation $\tau$ on the alternatives; that is, the SCC is symmetric with respect to the alternatives (in the same way that anonymity can be interpreted as symmetry with respect to voters). 

\begin{theorem}
\label{thm:main}
Let $\Pi$ be the TM or PL process, let $A\subseteq \X$ be a nonempty, finite subset of alternatives, and let $a \in \text{arg\,max}_{x \in A}\ \mu_x$. Moreover, let $f$ be an anonymous SCC that is monotonic and neutral. Then $a\in f(\Pi(A))$.
\end{theorem}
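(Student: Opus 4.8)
My plan is to argue by contradiction, converting the assumption that $a$ is \emph{not} selected into a violation of neutrality. Throughout write $M=\mu_a=\max_{x\in A}\mu_x$ and $\pi=\Pi(A)$, and recall that for both processes the utilities have the location form $U_x=\mu_x+\eta_x$ with the noise terms $\{\eta_x\}$ independent and identically distributed (Gaussian for TM, Gumbel for PL). Suppose $a\notin f(\pi)$. Since $f(\pi)$ is nonempty, fix some $b\in f(\pi)$; necessarily $b\neq a$ and $\mu_b\le M$.

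The central device is to raise $b$ to the top. Let $\pi'$ be the profile induced by the \emph{same} process after increasing $b$'s mode utility from $\mu_b$ to $M$, leaving every other mode utility (including $\mu_a=M$) unchanged. The first thing I would establish is that $\pi'$ is obtained from $\pi$ by \emph{pushing $b$ upward}, so that the monotonicity axiom applies. This is exactly where the location structure does the work: coupling the two profiles through a single shared draw of the noise $\{\eta_x\}$, increasing $\mu_b$ increases $U_b$ pointwise while leaving every other utility untouched, so in each realization $b$ can only move up in the sorted order and the relative order of all other alternatives is preserved. The resulting monotone coupling witnesses that $\pi'$ is a push-up of $b$ in the sense demanded by monotonicity.

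With this in hand the axioms collide. Monotonicity (part 1) applied to $b$ gives $b\in f(\pi)\Rightarrow b\in f(\pi')$. Monotonicity (part 2) applied to $b$, with $b\in f(\pi)$ and the non-winner $a\notin f(\pi)$, gives $a\notin f(\pi')$. But in $\pi'$ the alternatives $a$ and $b$ now share the mode utility $M$ and carry independent, identically distributed noise, so the transposition $\tau_{ab}$ leaves the process invariant, i.e.\ $\tau_{ab}(\pi')=\pi'$. Neutrality then forces $a\in f(\pi')\iff b\in f(\pi')$, so $b\in f(\pi')$ yields $a\in f(\pi')$, contradicting $a\notin f(\pi')$. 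Hence $a\in f(\pi)$; and since the target $a$ was an arbitrary maximizer of $\mu_x$ (the degenerate case $\mu_b=M$, where $\pi'=\pi$, is handled by the same neutrality step), every alternative attaining the maximum mode utility is selected. Note that this route uses only the location-plus-exchangeability structure and never needs the full swap-dominance/monotone-likelihood-ratio machinery.

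The main obstacle I anticipate is the first claim: rigorously certifying that raising a single mode utility realizes the axiom's informal notion of ``pushing $b$ upward'' as an operation on the anonymous profile. Monotonicity is phrased as a transfer of probability mass toward rankings in which $b$ sits higher while the relative order of the other alternatives is fixed, whereas $\pi$ and $\pi'$ are specified only implicitly through the utility process\,---\,and for the TM process even the probability of a single ranking is intractable. The coupling argument is designed to bypass this entirely: instead of comparing $\pi(\sigma)$ and $\pi'(\sigma)$ ranking by ranking, it exhibits a joint law under which each sample from $\pi'$ dominates the corresponding sample from $\pi$ in the ``$b$ raised, others fixed'' partial order, which is precisely the hypothesis the axiom consumes. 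Making this coupling-to-axiom bridge precise, and checking that $\tau_{ab}(\pi')=\pi'$ follows from the exchangeability of the noise at equal modes, is the crux; the remainder is a two-line deduction from monotonicity and neutrality.
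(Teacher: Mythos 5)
Your proof is correct, but it takes a genuinely different route from the paper's. The paper never reasons directly about the TM/PL profiles; it builds a swap-dominance pipeline: monotonicity plus neutrality imply SwD-efficiency (Lemma~\ref{lem:swap-dom-eff}); a pointwise density inequality (``dominance in the utility process,'' Definition~\ref{def:dom-util-proc}) implies swap-dominance at the profile level (Lemma~\ref{lem:dom-implies-SwD}); explicit Gaussian and Gumbel computations show $\mu_a \geq \mu_b$ yields that dominance (Lemma~\ref{lem:TM-PL-dom}); and the general Theorem~\ref{thm:apply-rule-proc} then delivers the conclusion. You replace all of the likelihood-ratio computations with two structural facts about location families: a shared-noise coupling showing that raising $\mu_b$ to $\mu_a$ realizes a legitimate ``push $b$ upwards'' operation (a transfer of mass from each ranking $\sigma$ to rankings in which $b$ weakly rises while the relative order of the others is fixed---exactly what the axiom consumes), and exchangeability of $a$ and $b$ at equal modes, so that $\tau_{ab}(\pi')=\pi'$ and neutrality forces $a\in f(\pi') \Leftrightarrow b\in f(\pi')$, colliding with the two monotonicity consequences. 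Both proofs lean on the same semi-formal reading of ``pushing upwards'' (the paper's own proof of Lemma~\ref{lem:swap-dom-eff} shifts excess weight in just this spirit), and ties are a non-issue since the noise distributions are continuous, so I see no gap. What your route buys: it is shorter, bypasses the density manipulations entirely, and applies verbatim to any consistent permutation process whose utility process is a location family with i.i.d.\ (indeed exchangeable) noise. What the paper's route buys: the swap-dominance machinery is not throwaway---it establishes that TM and PL are SwD-compatible, i.e., that $\SwD_\Pi$ is a total preorder on all of $\X$, a stronger structural fact that is reused for the general Theorem~\ref{thm:apply-rule-proc} and, crucially, for the stability results (Lemma~\ref{lem:strong-apply}, Theorem~\ref{thm:total-swap-self-cons}, and Theorem~\ref{thm:stability}), none of which follow from your coupling argument alone.
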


To understand the implications of the theorem, we first note that many of the common voting rules, including plurality, Borda count (and, in fact, all positional scoring rules), Copeland, maximin, and Bucklin~\cite{BCEL+16}, are associated with anonymous, neutral, and monotonic SCCs. Specifically, all of these rules have a notion of score, and the SCC simply selects all the alternatives tied for the top score (typically there is only one).\footnote{Readers who are experts in social choice have probably noted that there are no social choice \emph{functions} that are both anonymous and neutral~\cite{Moul83}, intuitively because it is impossible to break ties in a neutral way. This is precisely why we work with social choice \emph{correspondences}.} The theorem then implies that all of these rules would agree that, given a subset of alternatives $A$, an alternative $a\in A$ with maximum mode utility is an acceptable winner, i.e., it is at least tied for the highest score, if it is not the unique winner. As we will see in Section~\ref{sec:instantiation}, such an alternative is very easy to identify, which is why, in our view, Theorem~\ref{thm:main} gives a satisfying solution to the challenges posed at the beginning of this subsection. We emphasize that this is merely an instantiation of Theorem~\ref{thm:apply-rule-proc}, which provides our result for general permutation processes.

The rest of this subsection is devoted to building the conceptual framework, and stating and proving the lemmas, required for the proof of Theorem~\ref{thm:main}, as well as to the statement and proof of Theorem~\ref{thm:apply-rule-proc}. 

Starting off, let $\pi$ denote an anonymous preference profile (or distribution over rankings) over alternatives $A$. We define the ranking $\sigma^{ab}$ as the ranking $\sigma$ with alternatives $a$ and $b$ swapped, i.e. $\sigma^{ab}(x)=\sigma(x)$ if $x\in A\setminus \{a,b\}$, $\sigma^{ab}(b)=\sigma(a)$, and $\sigma^{ab}(a)=\sigma(b)$.
\begin{definition}
We say that alternative $a\in A$ \emph{swap-dominates} alternative $b\in A$ in anonymous preference profile $\pi$ over $A$\,---\,denoted by $a \SwD_\pi b$\,---\,if for every ranking $\sigma\in \mathcal{S}_A$ with $a \succ_\sigma b$ it holds that
$\pi(\sigma) \geq \pi(\sigma^{ab})$.
\end{definition}
In words, $a$ swap-dominates $b$ if every ranking that places $a$ above $b$ has at least as much weight as the ranking obtained by swapping the positions of $a$ and $b$, and keeping everything else fixed. This is a very strong dominance relation, and, in particular, implies existing dominance notions such as \emph{position dominance}~\cite{CPS16}.
%
%
%
%
Next we define a property of social choice correspondences, which intuitively requires that the correspondence adhere to swap dominance relations, if they exist in a given anonymous preference profile. 

\begin{definition} \label{def:swap-dom-eff}
An anonymous SCC $f$ is said to be \emph{swap-dominance-efficient (SwD-efficient)} if for every anonymous preference profile $\pi$ and any two alternatives $a$ and $b$, if $a$ swap-dominates $b$ in $\pi$, then $b\in f(\pi)$ implies $a\in f(\pi)$. 
\end{definition}

Because swap-dominance is such a strong dominance relation, SwD-efficiency is a very weak requirement, which is intuitively satisfied by almost any ``reasonable'' voting rule. This intuition is formalized in the following lemma. 
\begin{lemma} \label{lem:swap-dom-eff}
Any anonymous SCC that satisfies monotonicity and neutrality is SwD-efficient.
\end{lemma}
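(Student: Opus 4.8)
The plan is to argue by contradiction, using neutrality to reduce to a configuration in which the two monotonicity conditions can be chained, and\,---\,this is the key decision\,---\,tracking the \emph{loser} $b$ rather than the winner $a$. Suppose $a \SwD_\pi b$ and $b \in f(\pi)$, but $a \notin f(\pi)$. Let $\tau$ be the transposition of $a$ and $b$, and consider the relabeled profile $\tau(\pi)$, which satisfies $(\tau(\pi))(\sigma) = \pi(\sigma^{ab})$ for every $\sigma$. By neutrality, $f(\tau(\pi)) = \tau(f(\pi))$, so from $b \in f(\pi)$ we obtain $a \in f(\tau(\pi))$, and from $a \notin f(\pi)$ we obtain $b \notin f(\tau(\pi))$. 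Thus in $\tau(\pi)$ the dominator $a$ is a winner while the dominatee $b$ is a loser.

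Next I would transform $\tau(\pi)$ back into $\pi$ through an intermediate profile $\nu$, realizing $\tau(\pi) \xrightarrow{\,a\uparrow\,} \nu \xrightarrow{\,b\downarrow\,} \pi$, where the first stage pushes only $a$ upward and the second pushes only $b$ downward. Concretely, for each ranking $\sigma$ with $b \succ_\sigma a$, swap-dominance gives $\pi(\sigma^{ab}) \ge \pi(\sigma)$, so the nonnegative mass $\pi(\sigma^{ab}) - \pi(\sigma)$ can be routed out of $\sigma$: first to the ranking obtained by lifting $a$ to the position directly above $b$ (leaving the relative order of all other alternatives intact), and then to $\sigma^{ab}$ by dropping $b$ back down past the alternatives that originally separated it from $a$. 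Summing these elementary moves defines $\nu$, and one checks that $\tau(\pi) \to \nu$ is a legitimate upward push of $a$ while $\nu \to \pi$ is a legitimate downward push of $b$, with every transported mass nonnegative precisely because of swap-dominance.

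I would then chain the monotonicity conditions. Applying the second monotonicity condition to $\tau(\pi) \xrightarrow{\,a\uparrow\,} \nu$\,---\,where $a$ is a winner, $b$ is a loser, and $a$ is pushed upward\,---\,yields $b \notin f(\nu)$. On the other hand, the second stage $\nu \xrightarrow{\,b\downarrow\,} \pi$, read in reverse, is an upward push of $b$ carrying $\pi$ to $\nu$; since $b \in f(\pi)$ by assumption, the first monotonicity condition gives $b \in f(\nu)$. These contradict each other, so in fact $a \in f(\pi)$, establishing SwD-efficiency.

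The crux is the decomposition step together with the choice to follow $b$. The naive route\,---\,placing $a \in f(\tau(\pi))$ by neutrality and then ``pushing $a$ up'' to recover $\pi$ and invoke the first condition\,---\,fails, because $\sigma \mapsto \sigma^{ab}$ displaces both $a$ and $b$ relative to the alternatives between them, so it is not a single upward push of $a$; moreover no two-stage version can preserve $a$'s membership, since the $b$-downward stage pushes the intervening alternatives \emph{upward}, and pushing a third alternative up can legitimately eject a co-winner. The resolution is that a loser pushed further down must stay a loser\,---\,a fact that is just the first monotonicity condition read in reverse\,---\,which is exactly what makes tracking $b$ robust where tracking $a$ is not. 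Verifying that $\nu$ realizes the two pushes with nonnegative mass flow is the main technical content, and it is driven entirely by the swap-dominance inequalities.
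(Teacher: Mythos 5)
Your proof is correct and is essentially the paper's own argument run in mirror image: the paper shifts the excess weight $\pi(\sigma)-\pi(\sigma^{ab})$ from $\pi$ toward $\tau(\pi)$ (pushing $b$ up, then $a$ down, tracking that the loser $a$ keeps losing) and lands the contradiction at $\tau(\pi)$ via neutrality, whereas you apply neutrality first and run the same two-stage excess-mass decomposition, through the same intermediate profile, from $\tau(\pi)$ back to $\pi$, landing the contradiction at $\nu$. The key ingredients\,---\,nonnegative excess weight from swap-dominance, the adjacent-placement intermediate ranking, one monotonicity condition applied directly and the other read in reverse\,---\,are identical, so this is the same approach up to relabeling.
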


\begin{proof}
Let $f$ be an anonymous SCC that satisfies monotonicity and neutrality. Let $\pi$ be an arbitrary anonymous preference profile, and let $a, b$ be two arbitrary alternatives such that $a \SwD_\pi b$. Now, suppose for the sake of contradiction that $b\in f(\pi)$ but $a\notin f(\pi)$.

Consider an arbitrary ranking $\sigma$ with $a \succ_\sigma b$. Since $a \SwD_\pi b$, $\pi(\sigma) \geq \pi(\sigma^{ab})$. In other words, we have an excess weight of $\pi(\sigma) - \pi(\sigma^{ab})$ on $\sigma$. For this excess weight of $\sigma$, move $b$ upwards and place it just below $a$. By monotonicity, $b$ still wins and $a$ still loses in this modified profile. We repeat this procedure for every such $\sigma$ (i.e. for its excess weight, move $b$ upwards, until it is placed below $a$). In the resulting profile, $a$ still loses. Now, for each of the modified rankings, move $a$ down to where $b$ originally was. By monotonicity, $a$ still loses in the resulting profile $\pi'$, i.e., $a\notin f(\pi')$.

On the other hand, this procedure is equivalent to shifting the excess weight $\pi(\sigma) - \pi(\sigma^{ab})$ from $\sigma$ to $\sigma^{ab}$ (for each $\sigma$ with $a \succ_\sigma b$). Hence, the profile $\pi'$ we end up with is such that $\pi'(\sigma) = \pi(\sigma^{ab})$ and $\pi'(\sigma^{ab}) = \pi(\sigma)$, i.e. the new profile is the original profile with $a$ and $b$ swapped. Therefore, by neutrality, it must be the case that $a\in f(\pi')$. This contradicts our conclusion that $a\notin f(\pi')$, thus completing the proof.
\end{proof}

So far, we have defined a property, SwD-efficiency, that any SCC might potentially satisfy. But why is this useful in the context of aggregating permutation processes? We answer this question in Theorem~\ref{thm:apply-rule-proc}, but before stating it, we need to introduce the definition of a property that a \emph{permutation process} might satisfy.

\begin{definition}
Alternative $a \in \X$ swap-dominates alternative $b \in \X$ in the permutation process $\Pi$\,---\,denoted by $a \SwD_\Pi b$\,---\,if for every finite set of alternatives $A \subseteq \X$ such that $\{a,b\}\subseteq A$, $a$ swap-dominates $b$ in the anonymous preference profile $\Pi(A)$.
\end{definition}
%
We recall that a \emph{total preorder} is a binary relation that is transitive and total (and therefore reflexive). 
\begin{definition} \label{def:total-order-proc}
A permutation process $\Pi$ over $\X$ is said to be \emph{SwD-compatible} if the binary relation $\SwD_\Pi$ is a total preorder on $\X$.
\end{definition}
We are now ready to state our main theorem.
\begin{theorem} \label{thm:apply-rule-proc}
Let $f$ be an SwD-efficient anonymous SCC, and let $\Pi$ be an SwD-compatible permutation process. Then for any finite subset of alternatives $A$, there exists $a\in A$ such that $a \SwD_\Pi b $ for all $b \in A$. Moreover, $a \in f(\Pi(A))$. 
\end{theorem}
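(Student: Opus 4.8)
The plan is to prove the two assertions in sequence, since the membership claim builds directly on the existence claim together with SwD-efficiency.

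First I would establish the existence of a swap-dominating element $a \in A$. Since $\Pi$ is SwD-compatible, the relation $\SwD_\Pi$ is a total preorder on $\X$, and in particular its restriction to the nonempty finite set $A$ is again a total preorder on $A$. A total preorder on a nonempty finite set always admits a greatest element: by totality any two alternatives are comparable, and by transitivity together with finiteness one obtains an element that dominates all others (formally by a short induction on $|A|$, or by iteratively passing to a dominating alternative until the process terminates). This yields $a \in A$ with $a \SwD_\Pi b$ for all $b \in A$.

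Next I would deduce that $a \in f(\Pi(A))$. Viewing the distribution $\Pi(A)$ as an anonymous preference profile $\pi$, the relation $a \SwD_\Pi b$ specializes\,---\,by taking the finite set in the definition of $\SwD_\Pi$ to be $A$ itself, which contains both $a$ and $b$\,---\,to the statement that $a$ swap-dominates $b$ in $\pi$, for every $b \in A$. Since $f$ is a social choice correspondence, $f(\pi)$ is nonempty, so I would fix some $c \in f(\pi)$. If $c = a$ we are immediately done; otherwise $a$ swap-dominates $c$ in $\pi$, and SwD-efficiency of $f$ applied to the pair $(a,c)$ forces $a \in f(\pi)$ as well. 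Either way $a \in f(\Pi(A))$, completing the argument.

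Both ingredients are individually routine, so I do not expect any single hard computation. The main conceptual point, and the step I would be most careful about, is the reduction from the \emph{global} relation $\SwD_\Pi$ on $\X$ to the \emph{pairwise} swap-dominance condition inside the single profile $\pi = \Pi(A)$ on which SwD-efficiency can act. In particular, I want to invoke nonemptiness of the correspondence to extract a concrete witness $c \in f(\pi)$ and then apply the pairwise implication once, rather than attempting to apply SwD-efficiency to all of $A$ simultaneously, since the definition of SwD-efficiency supplies only a pairwise conclusion.
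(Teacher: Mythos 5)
Your proposal is correct and takes essentially the same route as the paper: existence of a maximal element follows from restricting the total preorder $\SwD_\Pi$ to the finite set $A$, and membership follows by specializing $a \SwD_\Pi b$ to swap-dominance in the single profile $\Pi(A)$ and then combining nonemptiness of $f(\Pi(A))$ with the pairwise SwD-efficiency implication. The only cosmetic difference is that you apply SwD-efficiency directly to a witness $c \in f(\Pi(A))$, whereas the paper phrases the same step as a proof by contradiction, assuming $a \notin f(\Pi(A))$ and concluding that $f(\Pi(A))$ would be empty.
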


\begin{proof}
Let $f$, $\Pi$, and $A$ as in the theorem statement. Since $\Pi$ is SwD-compatible, $\SwD_\Pi$ is a total preorder on $\X$. In turn, the relation $\SwD_\Pi$ restricted to $A$ is a total preorder on $A$. Therefore, there is $a \in A$ such that $a \SwD_\Pi b$ for all $b \in A$.

Suppose for the sake of contradiction that $a\notin f(\Pi(A))$, and let $b\in A\setminus \{a\}$. Then it holds that $a \SwD_\Pi b$. In particular, $a \SwD_{\Pi(A)} b$. But, because $f$ is SwD-efficient and $a\notin f(\Pi(A))$, we have that $b\notin f(\Pi(A))$. This is true for every $b \in A$, leading to $f(\Pi(A)) = \phi$, which contradicts the definition of an SCC.
\end{proof}

Theorem~\ref{thm:apply-rule-proc} asserts that for any SwD-compatible permutation process, any SwD-efficient SCC (which, as noted above, include most natural SCCs, namely those that are monotonic and neutral), given any finite set of alternatives, will always select a very natural winner that swap-dominates other alternatives. A practical use of this theorem requires two things: to show that the permutation process is SwD-compatible, and that it is computationally tractable to select an alternative that swap-dominates other alternatives in a finite subset. The next few lemmas provide some general recipes for establishing these properties for general permutation processes, and, in particular, we show that they indeed hold under the TM and PL processes. First, we have the following definition. 
\begin{definition} \label{def:dom-util-proc}
Alternative $a\in \X$ \emph{dominates} alternative $b\in \X$ in utility process $U$ if for every finite subset of alternatives containing $a$ and $b$, $\{a,b,x_3, \hdots x_m\} \subseteq \X$, and every vector of utilities $(u_1, u_2, u_3 \hdots u_m) \in \mathbb{R}^m$ with $u_1 \geq u_2$, it holds that
\begin{equation} \label{eqn:util-dom}
p_{(U_a, U_b, U_{x_3}, \hdots U_{x_m})}(u_1, u_2, u_3 \hdots u_m)
 \geq p_{(U_a, U_b, U_{x_3}, \hdots U_{x_m})}(u_2, u_1, u_3 \hdots u_m),
\end{equation}
where $p_{(U_a, U_b, U_{x_3}, \hdots U_{x_m})}$ is the density function of the random vector $(U_a, U_b, U_{x_3}, \hdots U_{x_m})$.
\end{definition}

Building on this definition, Lemmas~\ref{lem:dom-implies-SwD} and \ref{lem:TM-PL-dom} directly imply that the TM and PL processes are SwD-compatible.

\begin{lemma}\label{lem:dom-implies-SwD}
Let $\Pi$ be a consistent permutation process, and let $U$ be its corresponding utility process. If alternative $a$ dominates alternative $b$ in $U$, then $a$ swap-dominates $b$ in $\Pi$.
\end{lemma}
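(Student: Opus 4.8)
The plan is to unfold the two layers of definitions and reduce the claim to a single pointwise density inequality that the domination hypothesis supplies directly. Swap-domination of $b$ by $a$ in the process $\Pi$ means, by its definition, that $a \SwD_{\Pi(A)} b$ for every finite $A \subseteq \X$ with $\{a,b\} \subseteq A$; and $a \SwD_{\Pi(A)} b$ means, in turn, that $\Pi(A)(\sigma) \geq \Pi(A)(\sigma^{ab})$ for every $\sigma \in \S_A$ with $a \succ_\sigma b$. So I would fix an arbitrary such $A = \{a, b, x_3, \ldots, x_m\}$ and an arbitrary such $\sigma$, and prove this one inequality; since $A$ and $\sigma$ are arbitrary, this establishes $a \SwD_\Pi b$.

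The bridge between the permutation process and the utility process is the sorting representation. Writing $p$ for the joint density of $(U_a, U_b, U_{x_3}, \ldots, U_{x_m})$ and $R_\sigma \subseteq \mathbb{R}^m$ for the set of utility vectors that sort to $\sigma$, the utility-process identity gives $\Pi(A)(\sigma) = \Pr[\text{sort}(U_a, U_b, U_{x_3}, \ldots, U_{x_m}) = \sigma] = \int_{R_\sigma} p(u_a, u_b, u_3, \ldots, u_m)\, du$, and likewise $\Pi(A)(\sigma^{ab}) = \int_{R_{\sigma^{ab}}} p\, du$. The key structural observation is that the coordinate-transposition map $T(u_a, u_b, u_3, \ldots, u_m) = (u_b, u_a, u_3, \ldots, u_m)$ carries $R_\sigma$ bijectively onto $R_{\sigma^{ab}}$: handing $a$ the utility value that $b$ held and vice versa moves $a$ into the rank-slot $b$ occupied and $b$ into the slot $a$ occupied, leaving every other position fixed, which is exactly the effect of the swap $\sigma \mapsto \sigma^{ab}$. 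Since $T$ is linear with Jacobian determinant $-1$, the change-of-variables formula yields $\Pi(A)(\sigma^{ab}) = \int_{R_\sigma} p(u_b, u_a, u_3, \ldots, u_m)\, du$, so both probabilities are now integrals over the same region $R_\sigma$.

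It then suffices to compare the two integrands on $R_\sigma$. Because $a \succ_\sigma b$, every utility vector in $R_\sigma$ satisfies $u_a \geq u_b$ (strictly, off a measure-zero boundary). The hypothesis that $a$ dominates $b$ in $U$, applied pointwise via Equation~\ref{eqn:util-dom} with $u_1 = u_a$ and $u_2 = u_b$, gives $p(u_a, u_b, u_3, \ldots, u_m) \geq p(u_b, u_a, u_3, \ldots, u_m)$ at every point of $R_\sigma$. Integrating this inequality over $R_\sigma$ produces $\Pi(A)(\sigma) \geq \Pi(A)(\sigma^{ab})$, which is the desired conclusion.

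The step I expect to require the most care is verifying that $T$ maps $R_\sigma$ onto $R_{\sigma^{ab}}$, together with the tie-breaking technicality buried in the definition of $\text{sort}$. Under the lexicographic tie-breaking fixed in the preliminaries, the loci $\{u_a = u_b\}$ (and the other tie hyperplanes) on which sorting is ambiguous form a Lebesgue-null set and therefore contribute nothing to the integrals; this should be stated explicitly rather than glossed over, though it causes no real difficulty since the lemma already presumes a density. The remaining ingredients\,---\,the utility-process identity guaranteed by consistency and the unit absolute Jacobian of a coordinate transposition\,---\,are routine.
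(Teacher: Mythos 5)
Your proposal is correct and follows essentially the same route as the paper's proof: both reduce the claim to showing $\pi(\sigma) \geq \pi(\sigma^{ab})$ for an arbitrary finite $A$ and arbitrary $\sigma$ with $a \succ_\sigma b$, express both probabilities as integrals of the joint utility density over a common region where $u_a \geq u_b$, and conclude by applying the pointwise inequality of Equation~\eqref{eqn:util-dom} under the integral. The only difference is presentational\,---\,the paper encodes the region via nested integration limits and treats the identification of the swapped integral with $\pi(\sigma^{ab})$ implicitly, whereas you make the transposition change of variables (and the measure-zero tie set) explicit.
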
 

\begin{proof}
Let $a$ and $b$ be two alternatives such that $a$ dominates $b$ in $U$. In addition, let $A$ be a finite set of alternatives containing $a$ and $b$, let $\pi$ denote the anonymous preference profile $\Pi(A)$, and let $m = |A|$. Consider an arbitrary ranking $\sigma$ such that $a \succ_\sigma b$. Now, let $x_\ell=\sigma^{-1}(\ell)$ denote the alternative in position $\ell$ of $\sigma$, and let $i = \sigma(a)$, $j = \sigma(b)$, i.e., $$x_1 \succ_\sigma x_2 \cdots \succ_\sigma x_i (= a)\succ_\sigma \cdots \succ_\sigma x_j (= b)\succ_\sigma \cdots \succ_\sigma x_m.$$ 
Then,
\small
\begin{align*}
\pi(\sigma)&= P(U_{x_1} > U_{x_2} > \cdots  >U_{x_i} > \cdots >  U_{x_j} > \cdots > U_{x_m})\\
&= \int_{-\infty}^\infty \int_{-\infty}^{u_1} \cdots \int_{-\infty}^{u_{i-1}} \cdots \int_{-\infty}^{u_{j-1}} \cdots \int_{-\infty}^{u_{m-1}} p(u_1, u_2, \cdots, u_i, \cdots u_j, \cdots, u_m) du_m \cdots du_1.
\end{align*}
\normalsize
In this integral, because of the limits, we always have $u_i \geq u_j$. Moreover, since $x_i = a$ dominates $x_j = b$ in $U$, we have
\begin{align*}
\pi(\sigma) &\geq \int_{-\infty}^\infty \int_{-\infty}^{u_1} \cdots \int_{-\infty}^{u_{i-1}} \cdots \int_{-\infty}^{u_{j-1}} \cdots \int_{-\infty}^{u_{m-1}} p(u_1, u_2, \cdots, u_j, \cdots u_i, \cdots, u_m) du_m \cdots du_1.
\end{align*}
The right-hand side of this equation is exactly $\pi(\sigma^{ab})$. Hence, we have $\pi(\sigma) \geq \pi(\sigma^{ab})$. It follows that $a \SwD_\pi b$, i.e., $a \SwD_{\Pi(A)} b$. Also, this is true for any finite $A$ containing $a$ and $b$. We conclude that $a \SwD_\Pi b$.
\end{proof}

\begin{lemma}\label{lem:TM-PL-dom}
Under the TM and PL processes, alternative $a$ dominates alternative $b$ in the corresponding utility process if and only if $\mu_a \geq \mu_b$.
\end{lemma}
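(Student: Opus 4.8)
The plan is to prove both directions of the equivalence for each process by directly manipulating the joint densities, exploiting the fact that under both the TM and PL processes the utilities are independent, so the joint density factorizes. For a finite set $\{a,b,x_3,\ldots,x_m\}$ and a utility vector $(u_1,u_2,u_3,\ldots,u_m)$, the joint density is a product $g_a(u_1)\,g_b(u_2)\prod_{\ell\geq 3} g_{x_\ell}(u_\ell)$, where $g_{x}$ is the one-dimensional marginal density of $U_x$. Since swapping the first two arguments only swaps which density is evaluated at $u_1$ versus $u_2$ and leaves the factors indexed by $\ell\geq 3$ untouched, the inequality \eqref{eqn:util-dom} reduces, after cancelling the common factor $\prod_{\ell\geq 3} g_{x_\ell}(u_\ell)$, to the purely two-variable statement
\[
g_a(u_1)\,g_b(u_2) \;\geq\; g_a(u_2)\,g_b(u_1) \qquad \text{whenever } u_1 \geq u_2.
\]
Thus the main obstacle is the two-alternative case, and everything else is just removing the spectator alternatives.

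First I would handle the TM process. Here $g_x(u) \propto \exp\!\big(-(u-\mu_x)^2\big)$ (since the variance is $\tfrac12$), so the ratio $g_a(u_1)g_b(u_2)/\big(g_a(u_2)g_b(u_1)\big)$ is, after taking logarithms and expanding the squares, proportional to $\exp\!\big(2(\mu_a-\mu_b)(u_1-u_2)\big)$. When $u_1\geq u_2$, this exponent is nonnegative exactly when $\mu_a\geq\mu_b$, giving the inequality; and if $\mu_a<\mu_b$ one picks a point with $u_1>u_2$ to violate it, yielding the reverse direction. Next I would treat the PL process, where $g_x(u) = \tfrac{1}{\gamma}\exp\!\big(-\tfrac{u-\mu_x}{\gamma}\big)\exp\!\big(-e^{-(u-\mu_x)/\gamma}\big)$. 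The key observation is that the linear-in-$u$ prefactors contribute a factor $\exp\!\big(\tfrac{\mu_a-\mu_b}{\gamma}\big)$ to the cross ratio that is independent of $u_1,u_2$, and the remaining double-exponential terms rearrange so that the comparison again comes down to the sign of $\mu_a-\mu_b$ together with the monotonicity of $e^{-u/\gamma}$ in $u$.

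The forward implication (domination $\Rightarrow \mu_a\geq\mu_b$) I would prove by contraposition: if $\mu_a<\mu_b$, exhibit a concrete $(u_1,u_2)$ with $u_1>u_2$ for which the two-variable inequality is strict in the wrong direction, which is immediate from the explicit ratios computed above. The reverse implication ($\mu_a\geq\mu_b \Rightarrow$ domination) is exactly the verification that the two-variable inequality holds for all $u_1\geq u_2$, which the density computations establish. I expect the PL algebra to be the most delicate step, since one must check that the product of the polynomial-type and double-exponential factors preserves the inequality over the whole region $u_1\geq u_2$ rather than merely at a point; the cleanest route is to write the log-ratio as a sum of a term linear in $(\mu_a-\mu_b)$ and a term of the form $\big(e^{-u_2/\gamma}-e^{-u_1/\gamma}\big)\big(e^{\mu_a/\gamma}-e^{\mu_b/\gamma}\big)$ (up to positive constants) and observe that both factors share the sign of $\mu_a-\mu_b$ when $u_1\geq u_2$.
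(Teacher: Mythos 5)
Your proposal is correct and takes essentially the same route as the paper's proof: both reduce, via independence and cancellation of the spectator factors, to the two-variable inequality $g_a(u_1)g_b(u_2)\geq g_a(u_2)g_b(u_1)$, verify it with the same rearrangement algebra---for TM the sign of $(u_1-u_2)(\mu_a-\mu_b)$, for PL the sign of $\left(e^{-u_2/\gamma}-e^{-u_1/\gamma}\right)\left(e^{\mu_a/\gamma}-e^{\mu_b/\gamma}\right)$---and establish the converse by exhibiting a violating point with $u_1>u_2$. One harmless slip: in the PL cross ratio the linear-in-$u$ prefactors cancel exactly rather than contributing $\exp\left((\mu_a-\mu_b)/\gamma\right)$, so the log-ratio is precisely the product term you identify in your ``cleanest route'' and your sign analysis goes through unchanged.
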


\begin{proof}
We establish the property separately for the TM and PL processes.

\smallskip
\emph{TM process.} Let $a$ and $b$ be two alternatives such that $\mu_a \geq \mu_b$. Since we are dealing with a TM process, $U_a \sim \mathcal{N}(\mu_a, \frac{1}{2})$ and $U_b \sim \mathcal{N}(\mu_b, \frac{1}{2})$. Let $A$ be any finite set of alternatives containing $a$ and $b$. Since utilities are sampled independently in a TM process, the difference between the two sides of Equation~\eqref{eqn:util-dom} is that the left-hand side has $p_{U_a}(u_1) p_{U_b}(u_2)$, while the right-hand side has $p_{U_a}(u_2) p_{U_b}(u_1)$. It holds that
\begin{equation}
\label{eq:exp}
\begin{split}
&p_{U_a}(u_1) p_{U_b}(u_2)\\
 &\quad= \frac{1}{\sqrt{\pi}} \exp\left(-(u_1 - \mu_a)^2\right) \frac{1}{\sqrt{\pi}} \exp\left(-(u_2 - \mu_b)^2\right).\\
&\quad= \frac{1}{\pi}\exp\left(-u_1^2 - \mu_a^2 - u_2^2 - \mu_b^2 + 2u_1\mu_a + 2u_2\mu_b\right).
\end{split}
\end{equation}
We have $u_1 \geq u_2$ and $\mu_a \geq \mu_b$. Therefore, 
\begin{align*}
u_1\mu_a + u_2 \mu_b &= u_1\mu_b + u_1(\mu_a - \mu_b) + u_2 \mu_b\\
&\geq u_1 \mu_b + u_2(\mu_a - \mu_b) + u_2 \mu_b\\
&= u_1 \mu_b + u_2 \mu_a
\end{align*}
Substituting this into Equation~\eqref{eq:exp}, we obtain
\begin{align*}
p_{U_a}(u_1) p_{U_b}(u_2)&\geq \frac{1}{\pi}\exp\left(-u_1^2 - \mu_a^2 - u_2^2 - \mu_b^2 + 2u_1\mu_b + 2u_2\mu_a\right)\\
&= \frac{1}{\pi} \exp\left(- (u_2 - \mu_a)^2-(u_1 - \mu_b)^2 \right)\\
&= p_{U_a}(u_2) p_{U_b}(u_1)
\end{align*}
It follows that Equation~\eqref{eqn:util-dom} holds true. Hence, $a$ dominates $b$ in the corresponding utility process.

To show the other direction, let $a$ and $b$ be such that $\mu_a < \mu_b$. If we choose $u_1, u_2$ such that $u_1 > u_2$, using a very similar approach as above, we get $p_{U_a}(u_1) p_{U_b}(u_2) < p_{U_a}(u_2) p_{U_b}(u_1)$. And so, $a$ does not dominate $b$ in the corresponding utility process.\qed

\smallskip
\emph{PL process.} Let $a$ and $b$ be two alternatives such that $\mu_a \geq \mu_b$. Since we are dealing with a PL process, $U_a \sim \mathcal{G}(\mu_a, \gamma)$ and $U_b \sim \mathcal{G}(\mu_b, \gamma)$. Let $A$ be any finite set of alternatives containing $a$ and $b$. Since utilities are sampled independently in a PL process, the difference between the two sides of Equation~\eqref{eqn:util-dom} is that the left-hand side has $p_{U_a}(u_1) p_{U_b}(u_2)$, while the right-hand side has $p_{U_a}(u_2) p_{U_b}(u_1)$. It holds that
\begin{equation}
\label{eq:gumbel}
\begin{split}
p_{U_a}(u_1) p_{U_b}(u_2)
&= \frac{1}{\gamma} \exp\left( -\frac{u_1 - \mu_a}{\gamma} - e^{-\frac{u_1-\mu_a}{\gamma}} \right) \frac{1}{\gamma} \exp\left( -\frac{u_2 - \mu_b}{\gamma} - e^{-\frac{u_2-\mu_b}{\gamma}} \right)\\
&= \frac{1}{\gamma^2} \exp\left( -\frac{u_1 - \mu_a}{\gamma} - e^{-\frac{u_1-\mu_a}{\gamma}} -\frac{u_2 - \mu_b}{\gamma} - e^{-\frac{u_2-\mu_b}{\gamma}} \right)\\
&= \frac{1}{\gamma^2} \exp\left( -\frac{u_1 - \mu_a + u_2 - \mu_b}{\gamma} - \left(e^{-\frac{u_1}{\gamma}}e^{\frac{\mu_a}{\gamma}} + e^{-\frac{u_2}{\gamma}}e^{\frac{\mu_b}{\gamma}}\right) \right).
\end{split}
\end{equation}
We also know that $e^{-\frac{u_2}{\gamma}} \geq e^{-\frac{u_1}{\gamma}}$ and $e^{\frac{\mu_a}{\gamma}} \geq e^{\frac{\mu_b}{\gamma}}$. Similar to the proof for the TM process, we have
$$
e^{-\frac{u_2}{\gamma}}e^{\frac{\mu_a}{\gamma}} + e^{-\frac{u_1}{\gamma}}e^{\frac{\mu_b}{\gamma}} \geq e^{-\frac{u_1}{\gamma}}e^{\frac{\mu_a}{\gamma}} + e^{-\frac{u_2}{\gamma}}e^{\frac{\mu_b}{\gamma}}.
$$
Substituting this into Equation~\eqref{eq:gumbel}, we obtain
\begin{align*}
p_{U_a}(u_1) p_{U_b}(u_2)
 &\geq \frac{1}{\gamma^2} \exp\left( -\frac{u_1 - \mu_a + u_2 - \mu_b}{\gamma} - \left(e^{-\frac{u_2}{\gamma}}e^{\frac{\mu_a}{\gamma}} + e^{-\frac{u_1}{\gamma}}e^{\frac{\mu_b}{\gamma}}\right) \right)\\
&= \frac{1}{\gamma} \exp\left( -\frac{u_2 - \mu_a}{\gamma} - e^{-\frac{u_2-\mu_a}{\gamma}} \right) \frac{1}{\gamma} \exp\left( -\frac{u_1 - \mu_b}{\gamma} - e^{-\frac{u_1-\mu_b}{\gamma}} \right)\\
& = p_{U_a}(u_2) p_{U_b}(u_1)
\end{align*}
It follows that Equation~\eqref{eqn:util-dom} holds true. Hence, $a$ dominates $b$ in the corresponding utility process.

To show the other direction, let $a$ and $b$ be such that $\mu_a < \mu_b$. If we choose $u_1, u_2$ such that $u_1 > u_2$, using a very similar approach as above, we get $p_{U_a}(u_1) p_{U_b}(u_2) < p_{U_a}(u_2) p_{U_b}(u_1)$. And so, $a$ does not dominate $b$ in the corresponding utility process.
\end{proof}

The proof of Theorem~\ref{thm:main} now follows directly. 

\begin{proof}[Proof of Theorem~\ref{thm:main}]
By Lemma~\ref{lem:swap-dom-eff}, the anonymous SCC $f$ is SwD-efficient. Lemmas~\ref{lem:dom-implies-SwD} and \ref{lem:TM-PL-dom} directly imply that when $\Pi$ is the TM or PL process, $\SwD_\Pi$ is indeed a total preorder. In particular, $a \SwD_\Pi b$ if $\mu_a \geq \mu_b$. So, an alternative $a$ in $A$ with maximum mode utility satisfies $a\SwD_\Pi b$ for all $b\in A$. By Theorem~\ref{thm:apply-rule-proc}, if $a\in A$ is such that $a\SwD_\Pi b$ for all $b\in A$, then $a\in f(\Pi(A))$; the statement of the theorem follows.
\end{proof}

\subsection{Stability}

It turns out that the machinery developed for the proof of Theorem~\ref{thm:main} can be leveraged to establish an additional desirable property. 
\begin{definition} Given an anonymous SCC $f$, and a permutation process $\Pi$ over $\X$, we say that the pair $(\Pi, f)$ is \emph{stable} if for any nonempty and finite subset of alternatives $A \subseteq \X$, and any nonempty subset $B \subseteq A$,
$f(\Pi(A)) \cap B = f(\Pi(B))$ whenever $f(\Pi(A)) \cap B \neq \phi$.
\end{definition}
Intuitively, stability means that applying $f$ under the assumption that the set of alternatives is $A$, and then reducing to its subset $B$, is the same as directly reducing to $B$ and then applying $f$. This notion is related to classic axioms studied by \citet{Sen71}, specifically his \emph{expansion} and \emph{contraction} properties. In our setting, stability seems especially desirable, as our algorithm would potentially face decisions over many different subsets of alternatives, and the absence of stability may lead to glaringly inconsistent choices.

Our main result regarding stability is the following theorem.

\begin{theorem}
\label{thm:stability}
Let $\Pi$ be the TM or PL process, and let $f$ be the Borda count or Copeland SCC. Then the pair $(\Pi,f)$ is stable. 
\end{theorem}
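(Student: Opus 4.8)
The plan is to prove something considerably stronger than stability: that for the TM or PL process and $f$ equal to Borda or Copeland, the winning set is \emph{exactly} the set of alternatives of maximum mode utility, i.e., $f(\Pi(A)) = M_A := \{x \in A : \mu_x = \max_{y \in A}\mu_y\}$ for every finite $A$. Once this characterization is in hand, stability will follow from elementary properties of maxima. We already know from Theorem~\ref{thm:main} that every maximum-mode-utility alternative lies in $f(\Pi(A))$, so $M_A \subseteq f(\Pi(A))$; the substance is the reverse inclusion, which amounts to showing that the Borda and Copeland scores are \emph{strictly} increasing in the mode utility.

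To do this I would first rewrite both scores in terms of the pairwise win probabilities $P_{ab} := \Pr_{\sigma \sim \Pi(A)}[a \succ_\sigma b]$. By consistency of the permutation process, $P_{ab}$ equals the marginal $\Pr[U_a > U_b]$ in the utility process, which for both TM and PL depends only on $\mu_a - \mu_b$: explicitly $P_{ab} = \Phi(\mu_a - \mu_b)$ for TM (since $U_a - U_b \sim \mathcal{N}(\mu_a - \mu_b, 1)$), and $P_{ab} = (1 + e^{-(\mu_a - \mu_b)/\gamma})^{-1}$ for PL. In both cases $P_{ab} = g(\mu_a - \mu_b)$ for a strictly increasing $g$ satisfying $g(0) = 1/2$ and $g(-\delta) = 1 - g(\delta)$. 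The Borda score of $a$ is $\sum_{b \in A \setminus \{a\}} P_{ab} = \sum_{b \neq a} g(\mu_a - \mu_b)$, while the Copeland score counts (under any standard tie convention) the alternatives $b$ with $P_{ab} > 1/2$, equivalently $\mu_b < \mu_a$, against those with $\mu_b > \mu_a$. Since the TM/PL utilities are continuous, ties in utility have probability zero and the $P_{ab}$ are well defined.

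The key step, and the one I expect to require the most care, is the strict-monotonicity comparison: if $\mu_a > \mu_c$ then $\mathrm{score}(a) > \mathrm{score}(c)$. For Borda I would split off the $a$-versus-$c$ term and compare the two scores term by term over the common indices $b \notin \{a,c\}$: the diagonal contributes $g(\mu_a - \mu_c) - g(\mu_c - \mu_a) = 2g(\mu_a-\mu_c) - 1 > 0$, and each remaining term satisfies $g(\mu_a - \mu_b) \ge g(\mu_c - \mu_b)$ by monotonicity of $g$. For Copeland, $a$ beats every alternative that $c$ beats and additionally beats $c$, while losing only to a strict subset of the alternatives $c$ loses to, so $a$'s score strictly exceeds $c$'s. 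The same bookkeeping gives equal scores when $\mu_a = \mu_c$. Hence the top score is attained precisely at the alternatives of maximal mode utility, establishing $f(\Pi(A)) = M_A$. The only delicate bit is handling the differing index sets and the diagonal term in the Borda comparison; everything else is routine.

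Finally, to deduce stability, fix $A$ and $B \subseteq A$ with $f(\Pi(A)) \cap B \neq \phi$. Writing $\mu^*_A = \max_{y \in A}\mu_y$, nonemptiness of $M_A \cap B$ forces $\max_{y \in B}\mu_y = \mu^*_A$, since $B \subseteq A$ gives $\max_{y \in B}\mu_y \le \mu^*_A$ while the intersection supplies a $B$-alternative attaining $\mu^*_A$. Therefore
$$
f(\Pi(A)) \cap B = \{x \in B : \mu_x = \mu^*_A\} = \Big\{x \in B : \mu_x = \max_{y \in B}\mu_y\Big\} = M_B = f(\Pi(B)),
$$
which is exactly the required stability.
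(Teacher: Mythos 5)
Your proposal is correct, and it reaches the theorem by a genuinely different route than the paper. The paper proceeds axiomatically: it defines \emph{strong} SwD-efficiency (Definition~\ref{def:strong-swd}), proves by a combinatorial pairing argument on rankings $(\sigma,\sigma^{ab})$ that Borda and Copeland satisfy it (Lemma~\ref{lem:comm-strong-SwD}), shows that swap-dominance relations are preserved under marginalization for consistent SwD-compatible processes (Lemma~\ref{lem:swap-across-subsets}), and from these derives the exact characterization $f(\Pi(A))=\{a\in A: a \SwD_\Pi b \text{ for all } b\in A\}$ (Lemma~\ref{lem:strong-apply}) and a general stability theorem (Theorem~\ref{thm:total-swap-self-cons}) that holds for \emph{any} consistent SwD-compatible process and any strongly SwD-efficient SCC; Theorem~\ref{thm:stability} is then an instantiation. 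You instead exploit the specific structure of TM/PL and of Borda/Copeland: both scores are functions of the pairwise marginals $P_{ab}$ alone (Borda via the identity $\mathrm{Borda}(a)=\sum_{b\neq a}P_{ab}$), and consistency collapses $P_{ab}$ to a closed form $g(\mu_a-\mu_b)$ with $g$ strictly increasing, so strict monotonicity of scores in $\mu$ gives $f(\Pi(A))=M_A$ directly, after which stability is an elementary fact about maxima. Both proofs share the same skeleton\,---\,characterize the winning set exactly, then observe the characterization restricts well to subsets\,---\,and indeed the paper's characterization coincides with your $M_A$ for TM/PL. What your argument buys is concreteness and brevity: no ranking-level combinatorics, no restriction lemma, and explicit formulas ($\Phi$ and the logistic function). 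What it gives up is generality: it works only because Borda and Copeland are determined by pairwise marginals and because TM/PL marginals depend only on $\mu_a-\mu_b$, whereas the paper's Theorem~\ref{thm:total-swap-self-cons} applies to arbitrary consistent SwD-compatible permutation processes and arbitrary strongly SwD-efficient SCCs, including ones not expressible through pairwise statistics. One small point worth making explicit if you write this up: the Copeland tie convention matters slightly\,---\,under the paper's definition, beating requires pairwise probability strictly above $\tfrac{1}{2}$, and your bookkeeping (equal modes yield $P_{ab}=\tfrac{1}{2}$, hence no win either way) is consistent with that convention.
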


The \emph{Copeland} SCC, which appears in the theorem statement, is defined as follows. For an anonymous preference profile $\pi$ over $A$, we say that $a\in A$ beats $b\in A$ in a pairwise election if $$\sum_{\sigma\in \mathcal{S}_A:\, a\succ_\sigma b} \pi(\sigma) > \frac 1 2.$$ The \emph{Copeland score} of an alternative is the number of other alternatives it beats in pairwise elections; the Copeland SCC selects all alternatives that maximize the Copeland score. 

The rest of the section is devoted to building intuition for, and proving, Theorem~\ref{thm:stability}. Among other things, the proof requires a stronger notion of SwD-efficiency, which, as we show, is satisfied by Borda and Copeland. We will then be able to derive Theorem~\ref{thm:stability} as a corollary of the more general Theorem~\ref{thm:total-swap-self-cons}. We start by examining some examples that illustrate stability (or the lack thereof).

%
%
%

\begin{example}	\label{ex:borda-self-cons}
Let $f$ be the Borda count SCC, and let the set of alternatives be $\X = \{u, v, w, x, y\}$. Also, let $\Pi$ be a consistent permutation process, which,  given all the alternatives, gives a uniform distribution on the two rankings $(x \succ u \succ v \succ y \succ w)$ and $(y \succ w \succ x \succ u \succ v)$. The outcome of applying $f$ on this profile is $\{x\}$ (since $x$ has the strictly highest Borda score). But, the outcome of applying $f$ on the profile $\Pi(\{w,x,y\})$ is $\{y\}$ (since $y$ now has the strictly highest Borda score). Hence, $f(\Pi(\{u,v,w,x,y\})) \cap \{w,x,y\} \neq f(\Pi(w,x,y))$, even though the left-hand side is nonempty. We conclude that the tuple $(\Pi, f)$ does not satisfy stability.
\end{example}

\begin{example}
Consider the permutation process of Example~\ref{ex:borda-self-cons}, and let $f$ be the Copeland SCC. Once again, it holds that $f(\Pi(u,v,w,x,y))=\{x\}$ and $f(\Pi(w,x,y))=\{y\}$. Hence the pair $(\Pi, f)$ is not stable.
\end{example}

Now, in the spirit of Theorem~\ref{thm:apply-rule-proc}, let us see whether the pair $(\Pi, f)$ satisfies stability when $f$ is an SwD-efficient anonymous SCC, and $\Pi$ is an SwD-compatible permutation process. Example~\ref{ex:plurality} constructs such a $\Pi$ that is not stable with respect to the plurality SCC (even though plurality is SwD-efficient). 

\begin{example} \label{ex:plurality}
Let $f$ be the plurality SCC and the set of alternatives be $\X = \{a,b,c\}$. Also, let $\Pi$ be the consistent permutation process, which given all alternatives, gives the following profile: $0.35$ weight on $(a \succ b \succ c)$, $0.35$ weight on $(b \succ a \succ c)$, $0.1$ weight on $(c \succ a \succ b)$, $0.1$ weight on $(a \succ c \succ b)$ and $0.1$ weight on $(b \succ c \succ a)$. All the swap-dominance relations in this permutation process are: $a \SwD_\Pi b$, $b \SwD_\Pi c$ and $a \SwD_\Pi c$. Hence, $\SwD_\Pi$ is a total preorder on $\X$, and $\Pi$ is SwD-compatible.\\
Now, for this permutation process $\Pi$ and the plurality SCC $f$, we have: $f(\Pi(\{a,b,c\})) = \{a,b\}$ and $f(\Pi(\{a,b\})) = \{a\}$. Therefore, $(\Pi, f)$ is not stable.
\end{example}

This happens because Plurality is not \textit{strongly} SwD-efficient, as defined below (Example~\ref{ex:plurality} even shows why plurality violates this property).

\begin{definition} \label{def:strong-swd}
An anonymous SCC $f$ is said to be \textit{strongly SwD-efficient} if for every anonymous preference profile $\pi$ over $A$, and any two alternatives $a,b\in A$ such that $a \SwD_\pi b$,
\begin{enumerate}
\item If $b \notSwD_\pi a$, then $b \notin f(\pi)$.
\item If $b \SwD_\pi a$, then $b \in f(\pi) \Leftrightarrow a \in f(\pi)$.
\end{enumerate}
\end{definition}

It is clear that any strongly SwD-efficient SCC is also SwD-efficient. 

\begin{lemma} \label{lem:comm-strong-SwD}
The Borda count and Copeland SCCs are strongly SwD-efficient.
\end{lemma}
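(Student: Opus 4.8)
The plan is to prove, for each of the two rules, a single quantitative statement from which strong SwD-efficiency follows immediately: whenever $a \SwD_\pi b$, the score $a$ receives under $f$ is at least that of $b$; moreover this inequality is \emph{strict} whenever $b \notSwD_\pi a$, and the two scores are \emph{equal} whenever additionally $b \SwD_\pi a$. Given this, both parts of Definition~\ref{def:strong-swd} are routine: since both rules select exactly the alternatives of maximum score, a strictly larger score for $a$ forces $b \notin f(\pi)$ (part 1), while equal scores make $a$ and $b$ either both maximal or both non-maximal, giving $b \in f(\pi) \Leftrightarrow a \in f(\pi)$ (part 2). The engine behind every score comparison is the involution $\sigma \mapsto \sigma^{ab}$, which is a bijection between the rankings with $a \succ_\sigma b$ and those with $b \succ_\sigma a$; the hypothesis $a \SwD_\pi b$ says exactly that $\pi(\sigma) - \pi(\sigma^{ab}) \geq 0$ on the former set, and $b \notSwD_\pi a$ says this gap is strictly positive for at least one such $\sigma$.

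For Borda I would write the score of an alternative $x$ as $\sum_{\sigma} \pi(\sigma)(m - \sigma(x))$ with $m = |A|$, so that the score difference of $a$ and $b$ equals $\sum_\sigma \pi(\sigma)(\sigma(b) - \sigma(a))$. Folding the sum through the involution collapses it to
\[
\sum_{\sigma:\, a \succ_\sigma b} \bigl(\pi(\sigma) - \pi(\sigma^{ab})\bigr)\bigl(\sigma(b) - \sigma(a)\bigr).
\]
Every factor $\sigma(b) - \sigma(a)$ is strictly positive on this index set, and every factor $\pi(\sigma) - \pi(\sigma^{ab})$ is nonnegative by $a \SwD_\pi b$; hence the difference is $\geq 0$. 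If $b \notSwD_\pi a$ there is a term with $\pi(\sigma) - \pi(\sigma^{ab}) > 0$, making the difference strictly positive; if instead $b \SwD_\pi a$ then $\pi(\sigma) = \pi(\sigma^{ab})$ throughout and the difference is exactly $0$. This settles Borda.

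For Copeland the score is a count of pairwise victories rather than a linear functional of $\pi$, which is where the main difficulty lies. Writing $P_{xy} = \sum_{\sigma:\, x \succ_\sigma y}\pi(\sigma)$ for the pairwise tally, the crucial lemma I would establish is pairwise monotonicity: for every third alternative $c$, $P_{ac} \geq P_{bc}$. This follows from the same involution, but now one must isolate the only rankings on which the indicators of ``$a \succ_\sigma c$'' and ``$b \succ_\sigma c$'' disagree, namely those with $a \succ_\sigma c \succ_\sigma b$ (contributing $+1$) and those with $b \succ_\sigma c \succ_\sigma a$ (contributing $-1$); the map $\sigma \mapsto \sigma^{ab}$ is a bijection between these two sets, since it fixes the position of $c$, which lies between $a$ and $b$, yielding
\[
P_{ac} - P_{bc} = \sum_{\sigma:\, a \succ_\sigma c \succ_\sigma b}\bigl(\pi(\sigma) - \pi(\sigma^{ab})\bigr) \geq 0.
\]
The same pairing applied directly to $a$ and $b$ gives $P_{ab} - P_{ba} = \sum_{\sigma:\, a \succ_\sigma b}(\pi(\sigma) - \pi(\sigma^{ab})) \geq 0$, so $P_{ab} \geq \tfrac12 \geq P_{ba}$ and $b$ never beats $a$.

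Assembling the Copeland count, I split the opponents of $a$ and of $b$ into $\{a,b\}$ versus the remaining alternatives $C = A\setminus\{a,b\}$. Pairwise monotonicity gives $\mathbb{1}(P_{ac} > \tfrac12) \geq \mathbb{1}(P_{bc} > \tfrac12)$ for each $c \in C$, so $a$ beats at least as many members of $C$ as $b$ does; and since $P_{ba} \leq \tfrac12$, the alternative $b$ contributes nothing from its comparison with $a$, whereas $a$ contributes $\mathbb{1}(P_{ab} > \tfrac12) \geq 0$ from its comparison with $b$. Summing shows the Copeland score of $a$ is at least that of $b$. For strictness under $b \notSwD_\pi a$, the gap $\pi(\sigma) - \pi(\sigma^{ab})$ is strictly positive for some $\sigma$ with $a \succ_\sigma b$, forcing $P_{ab} > \tfrac12$; thus $a$ beats $b$ while $b$ does not beat $a$, a strict surplus of $1$ that the $C$-comparisons can only widen, so $a$'s score strictly exceeds $b$'s. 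Under $b \SwD_\pi a$ all gaps vanish, giving $P_{ac} = P_{bc}$ for every $c$ and $P_{ab} = P_{ba} = \tfrac12$, hence equal Copeland scores. The main obstacle, as noted, is the nonlinearity of the Copeland score; the pairwise-monotonicity lemma is precisely the device that linearizes the comparison one opponent at a time and makes the threshold-at-$\tfrac12$ bookkeeping go through.
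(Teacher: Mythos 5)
Your proof is correct and follows essentially the same route as the paper's: the same pairing $\sigma \mapsto \sigma^{ab}$ gives the Borda excess-score computation, and your pairwise-monotonicity claim $P_{ac} \geq P_{bc}$ for each third alternative $c$ is exactly the paper's key Copeland step (organized as a direct difference rather than a bound on one tally). The only minor divergence is the case $b \SwD_\pi a$, where the paper invokes neutrality of both rules while you verify score equality directly; both arguments are sound.
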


\begin{proof}
Let $\pi$ be an arbitrary anonymous preference profile over alternatives $A$, and let $a, b \in A$ such that $a \SwD_\pi b$. This means that for all $\sigma \in \mathcal{S}_A$ with $a \succ_\sigma b$, we have $\pi(\sigma) \geq \pi(\sigma^{ab})$. We will examine the two conditions (of Definition~\ref{def:strong-swd}) separately.

\medskip
\noindent\textbf{Case 1:} $b \notSwD_\pi a$. This means that there exists a ranking $\sigma_* \in \mathcal{S}_A$ with $b \succ_{\sigma_*} a$ such that $\pi(\sigma_*) < \pi(\sigma_*^{ab})$. Below we analyze each of the SCCs mentioned in the theorem.

\emph{Borda count.} $\mathcal{S}_A$ can be partitioned into pairs of the form $(\sigma, \sigma^{ab})$, where $\sigma$ is such that $a \succ_\sigma b$. We reason about how each pair contributes to the Borda scores of $a$ and $b$. Consider an arbitrary pair $(\sigma, \sigma^{ab})$ with $a \succ_\sigma b$. The score contributed by $\sigma$ to $a$ is $(m - \sigma(a)) \pi(\sigma)$, and the score contributed to $b$ is $(m - \sigma(b)) \pi(\sigma)$. That is, it gives an excess score of $(\sigma(b) - \sigma(a)) \pi(\sigma)$ to $a$. Similarly, the score of $a$ contributed by $\sigma^{ab}$ is $(m - \sigma^{ab}(a))\pi(\sigma^{ab}) = (m - \sigma(b))\pi(\sigma^{ab})$, and the score contributed to $b$ is $(m - \sigma(a))\pi(\sigma^{ab})$. So, $b$ gets an excess score of $(\sigma(b) - \sigma(a))\pi(\sigma^{ab})$ from $\sigma^{ab}$. Combining these observations, the pair $(\sigma, \sigma^{ab})$ gives $a$ an excess score of $(\sigma(b) - \sigma(a))(\pi(\sigma) - \pi(\sigma^{ab}))$, which is at least $0$. Since this is true for every pair $(\sigma, \sigma^{ab})$, $a$ has Borda score that is at least as high as that of $b$. Furthermore, the pair $(\sigma_*^{ab}, \sigma_*)$ is such that $\pi(\sigma_*^{ab}) - \pi(\sigma_*) > 0$, so, this pair gives $a$ an excess score that is strictly positive. We conclude that $a$ has strictly higher Borda score than $b$, hence $b$ is not selected by Borda count. 

\emph{Copeland.} Let $c\in A\setminus \{a,b\}$. In a pairwise election between $b$ and $c$, the total weight of rankings that place $b$ over $c$ is
\small
\begin{align*}
\sum_{\sigma \in \mathcal{S}_A:\, b \succ_\sigma c} \pi(\sigma) = \sum_{\sigma \in \mathcal{S}_A:\, (b \succ_\sigma c) \land (a \succ_\sigma c)} \pi(\sigma) + \sum_{\sigma \in \mathcal{S}_A:\, (b \succ_\sigma c) \land (c \succ_\sigma a)} \pi(\sigma).
\end{align*}
\normalsize
For the rankings in the second summation (on the right-hand side), we have $b \succ_\sigma a$ by transitivity. Hence, $\pi(\sigma) \leq \pi(\sigma^{ab})$ for such rankings. Therefore,
\small 
\begin{align*}
\sum_{\sigma \in \mathcal{S}_A:\, b \succ_\sigma c} \pi(\sigma) 
&\leq \sum_{\sigma \in \mathcal{S}_A:\, (b \succ_\sigma c) \land (a \succ_\sigma c)} \pi(\sigma) + \sum_{\sigma \in \mathcal{S}_A:\, (b \succ_\sigma c) \land (c \succ_\sigma a)} \pi(\sigma^{ab})\\
& = \sum_{\sigma \in \mathcal{S}_A:\, (b \succ_\sigma c) \land (a \succ_\sigma c)} \pi(\sigma) + \sum_{\sigma' \in \mathcal{S}_A:\, (a \succ_{\sigma'} c) \land (c \succ_{\sigma'} b)} \pi(\sigma')\\
& = \sum_{\sigma \in \mathcal{S}_A:\, a \succ_\sigma c} \pi(\sigma).
\end{align*}
\normalsize
In summary, we have $$\sum_{\sigma \in \mathcal{S}_A:\, b \succ_\sigma c} \pi(\sigma) \leq \sum_{\sigma \in \mathcal{S}_A:\, a \succ_\sigma c} \pi(\sigma).$$ Hence, if $b$ beats $c$ in a pairwise competition, then so does $a$. 
Therefore, the Copeland score of $a$ (due to all alternatives other than $a$ and $b$) is at least as high as that of $b$. Further, in a pairwise competition between $a$ and $b$, the weight of rankings that position $a$ above $b$ is $\sum_{\sigma \in \mathcal{S}_A:\, a \succ_\sigma b} \pi(\sigma)$ and the weight of those that prefer $b$ over $a$ is $\sum_{\sigma \in \mathcal{S}_A:\, b \succ_\sigma a} \pi(\sigma)$. But, because $\pi(\sigma) \geq \pi(\sigma^{ab})$ for any $\sigma$ with $a \succ_\sigma b$, and $\pi(\sigma_*^{ab}) > \pi(\sigma_*)$, $a$ beats $b$. Therefore, $a$ has a strictly higher Copeland score than $b$, and $b$ is not selected by Copeland. 

\medskip
\noindent\textbf{Case 2:} $b \SwD_\pi a$. In this case, $a \SwD_\pi b$ and $b \SwD_\pi a$. This means that for all $\sigma \in \mathcal{S}_A$, we have $\pi(\sigma) = \pi(\sigma^{ab})$. In other words, $\tau(\pi) = \pi$, where $\tau$ is the permutation that swaps $a$ and $b$. Both Borda count and Copeland are neutral SCCs. So, we have $\tau(f(\pi)) = f(\tau(\pi))$, which is in turn equal to $f(\pi)$. Hence, $a$ is selected if and only if $b$ is selected. 

We conclude that both conditions of Definition~\ref{def:strong-swd} are satisfied by Borda count and Copeland.
\end{proof}

\begin{lemma} \label{lem:swap-across-subsets}
Let $\Pi$ be a consistent permutation process that is SwD-compatible. Then, for any finite subset of alternatives $A\subseteq \X$, $\left(\SwD_{\Pi(A)}\right) =  \left(\SwD_\Pi|_A\right)$. 
\end{lemma}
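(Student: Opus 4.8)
The plan is to prove the claimed equality of relations by establishing the two containments $\left(\SwD_\Pi|_A\right)\subseteq\left(\SwD_{\Pi(A)}\right)$ and $\left(\SwD_{\Pi(A)}\right)\subseteq\left(\SwD_\Pi|_A\right)$ separately, where the first is immediate and the second carries all the work.

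For the containment $\left(\SwD_\Pi|_A\right)\subseteq\left(\SwD_{\Pi(A)}\right)$, I would simply unwind definitions: if $a,b\in A$ with $a\SwD_\Pi b$, then $a$ swap-dominates $b$ in $\Pi(A')$ for \emph{every} finite $A'\supseteq\{a,b\}$, and taking $A'=A$ gives $a\SwD_{\Pi(A)}b$ at once. This direction uses neither consistency nor SwD-compatibility.

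For the reverse containment, I would assume $a\SwD_{\Pi(A)}b$ and aim to show $a\SwD_\Pi b$, exploiting SwD-compatibility through a case split. Since $\SwD_\Pi$ is a total preorder, either $a\SwD_\Pi b$ (in which case we are done) or else $b\SwD_\Pi a$ while $a\notSwD_\Pi b$, and I would derive a contradiction from the latter. The key device is to push everything down to the two-element marginal $\Pi(\{a,b\})$ via consistency. On the one hand, $b\SwD_\Pi a$ gives $b\SwD_{\Pi(A)}a$, which together with the hypothesis $a\SwD_{\Pi(A)}b$ forces $\Pi(A)(\sigma)=\Pi(A)(\sigma^{ab})$ for every $\sigma\in\S_A$; summing over $\{\sigma:a\succ_\sigma b\}$ and using $\Pi(A)|_{\{a,b\}}=\Pi(\{a,b\})$ yields $\Pi(\{a,b\})(a\succ b)=\Pi(\{a,b\})(b\succ a)$. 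On the other hand, $a\notSwD_\Pi b$ supplies a finite set $A^*\supseteq\{a,b\}$ and a ranking $\sigma^*\in\S_{A^*}$ with $a\succ_{\sigma^*}b$ and $\Pi(A^*)(\sigma^*)<\Pi(A^*)((\sigma^*)^{ab})$, while $b\SwD_\Pi a$ also gives $b\SwD_{\Pi(A^*)}a$, i.e. $\Pi(A^*)(\tau)\le\Pi(A^*)(\tau^{ab})$ for all $\tau$ with $a\succ_\tau b$, with strict inequality at $\tau=\sigma^*$. Summing these over $\{\tau:a\succ_\tau b\}$ and restricting via consistency to $\{a,b\}$ gives the strict inequality $\Pi(\{a,b\})(a\succ b)<\Pi(\{a,b\})(b\succ a)$, contradicting the equality obtained from $A$. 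Hence $a\SwD_\Pi b$, as needed.

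I expect the main obstacle to be conceptual rather than computational: swap-dominance in $\Pi(A)$ is a statement about the full ranking distribution on $A$, and there is no direct way to transport it to an unrelated set $A^*$. The resolution is to notice that the only invariant comparable across all superset profiles is the pairwise marginal $\Pi(\{a,b\})$, so both the ``tie on $A$'' and the ``strict preference witnessed by $A^*$'' must be funneled through $\Pi(\{a,b\})$, where consistency (marginalization, in the spirit of the Luce choice axiom) makes them directly comparable. Totality of $\SwD_\Pi$, provided by SwD-compatibility, is exactly what licenses the case split that reduces the reverse containment to this contradiction. A minor point to handle carefully is the bookkeeping of the involution $\sigma\mapsto\sigma^{ab}$, which pairs $\{\sigma:a\succ_\sigma b\}$ with $\{\sigma:b\succ_\sigma a\}$, so that the two summations line up correctly under restriction.
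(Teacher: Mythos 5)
Your proof is correct, and it reaches the lemma by a leaner route than the paper's. The paper proceeds in two stages: it first proves a more general restriction identity\,---\,for any nested pair of finite sets $B \subseteq A$, $\left(\SwD_{\Pi(A)}|_B\right) = \left(\SwD_{\Pi(B)}\right)$\,---\,and then uses the two-element set $\{a,b\}$ as a hub, transferring $a \SwD_{\Pi(A)} b$ down to $\Pi(\{a,b\})$ and then up to $\Pi(C)$ for every finite $C$ containing $a$ and $b$. The hard (upward) direction of that identity is proved by a \emph{pointwise} contradiction: assuming $a \SwD_{\Pi(B)} b$ but $a \notSwD_{\Pi(A)} b$, totality gives $b \SwD_\Pi a$, and from the consistency identity $\sum_{\sigma_1:\, \sigma_1|_B = \sigma_*|_B}\Pi(A)(\sigma_1) = \sum_{\sigma_2:\, \sigma_2|_B = (\sigma_*|_B)^{ab}}\Pi(A)(\sigma_2)$ together with one strictly unbalanced pair $\left(\sigma_*, \sigma_*^{ab}\right)$, the paper extracts \emph{another} ranking $\sigma'$ whose pair is strictly unbalanced in the opposite direction, contradicting $b \SwD_{\Pi(A)} a$. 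You use the same two ingredients (totality of $\SwD_\Pi$ for the case split, consistency to funnel everything through $\{a,b\}$), but you \emph{aggregate} instead of working pointwise: the tie on $A$ sums to the equality $\Pi(\{a,b\})(a\succ b)=\Pi(\{a,b\})(b\succ a)$, the failure witness on $A^*$ (combined with $b \SwD_{\Pi(A^*)} a$) sums to the strict inequality $\Pi(\{a,b\})(a\succ b)<\Pi(\{a,b\})(b\succ a)$, and the two clash directly. This aggregation lets you skip both the general nested-set lemma and the extraction step, yielding a shorter self-contained proof of the stated result; the paper's route costs more but produces the restriction identity for all nested pairs as a reusable byproduct, which is exactly the interpretation it highlights (marginalizing out alternatives neither adds nor removes swap-dominance relations).
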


In words, as long as $\Pi$ is consistent and SwD-compatible, marginalizing out some alternatives from a profile does not remove or add any swap-dominance relations.

\begin{proof}[Proof of Lemma~\ref{lem:swap-across-subsets}]
We first show that for any $B \subseteq A \subseteq \X$, $\left(\SwD_{\Pi(A)}|_B\right) = \left(\SwD_{\Pi(B)}\right)$.

Let $a, b \in B$ such that $a \SwD_{\Pi(A)} b$. Now, let $\sigma \in \mathcal{S}_B$ be an arbitrary ranking such that $a \succ_\sigma b$. Also, let $\pi_B$ denote $\Pi(B)$ and $\pi_A$ denote $\Pi(A)$. Then, since $\Pi$ is consistent,
\begin{align*}
\pi_B(\sigma) = \sum_{\sigma_2 \in \mathcal{S}_A:\, \sigma_2|_B = \sigma} \pi_A(\sigma_2).
\end{align*}
Now, for $\sigma_2 \in \mathcal{S}_A$ such that $\sigma_2|_B = \sigma$, we have $a \succ_{\sigma_2} b$ and therefore $\pi_A(\sigma_2) \geq \pi_A(\sigma_2^{ab})$ (because $a \SwD_{\Pi(A)} b$). It follows that
\begin{align*}
\pi_B(\sigma) &= \sum_{\sigma_2 \in \mathcal{S}_A: \sigma_2|_B = \sigma} \pi_A(\sigma_2)
\geq \sum_{\sigma_2 \in \mathcal{S}_A: \sigma_2|_B = \sigma} \pi_A(\sigma_2^{ab})\\
&= \sum_{\sigma_2' \in \mathcal{S}_A: \sigma_2'|_B = \sigma^{ab}} \pi_A(\sigma_2')\\
&= \pi_B(\sigma^{ab}).
\end{align*}
Therefore, $a \SwD_{\Pi(B)} b$, that is,  $\left(\SwD_{\Pi(A)}|_B\right) \subseteq \left(\SwD_{\Pi(B)}\right)$.

Next we show that $\left(\SwD_{\Pi(B)}\right) \subseteq \left(\SwD_{\Pi(A)}|_B\right)$. Let $a, b \in B$ such that $a \SwD_{\Pi(B)} b$. Suppose for the sake of contradiction that $a \notSwD_{\Pi(A)} b$. This implies that $a \notSwD_{\Pi} b$. However, $\SwD_\Pi$ is a total preorder because $\Pi$ is SwD-compatible (by definition). It follows that $b \SwD_{\Pi} a$, and, in particular, $b \SwD_{\Pi(A)} a$ and $b \SwD_{\Pi(B)} a$.

As before, let $\pi_A$ denote $\Pi(A)$ and $\pi_B$ denote $\Pi(B)$. Because $a \notSwD_{\Pi(A)} b$, there exists $\sigma_* \in \mathcal{S}_A$ with $a \succ_{\sigma_*} b$ such that $\pi_A(\sigma_*) < \pi_A(\sigma^{ab}_*)$. Moreover, because $a \SwD_{\Pi(B)} b$ and $b \SwD_{\Pi(B)} a$, it holds that $\pi_B(\sigma_*|_B) = \pi_B\left((\sigma_*|_B)^{ab}\right)$. The consistency of $\Pi$ then implies that
\begin{align} \label{eqn:two-sums}
\sum_{\sigma_1 \in \mathcal{S}_A:\, \sigma_1|_B = \sigma_*|_B} \pi_A(\sigma_1) = \sum_{\sigma_2 \in \mathcal{S}_A:\, \sigma_2|_B = (\sigma_*|_B)^{ab}} \pi_A(\sigma_2).
\end{align}
Note $\sigma_1 = \sigma_*$ is a ranking that appears on the left-hand side of Equation~\eqref{eqn:two-sums}, and $\sigma_2 = \sigma_*^{ab}$ is a ranking that appears on the right-hand side. Furthermore, we know that $\pi_A(\sigma_*) < \pi_A(\sigma^{ab}_*)$. It follows that there exists $\sigma' \in \mathcal{S}_A$ with $\sigma'|_B = \sigma_*|_B$ such that $\pi_A(\sigma') > \pi_A\left((\sigma')^{ab}\right)$. Also, since $\sigma'|_B = \sigma_*|_B$, it holds that $a \succ_{\sigma'} b$. We conclude that it cannot be the case that $b \SwD_{\Pi(A)} a$, leading to a contradiction. Therefore, if $a \SwD_{\Pi(B)} b$, then $a \SwD_{\Pi(A)} b$, i.e., $\left(\SwD_{\Pi(B)}\right) \subseteq \left(\SwD_{\Pi(A)}|_B\right)$.

We next prove the lemma itself, i.e., that $\left(\SwD_{\Pi(A)}\right) =  \left(\SwD_\Pi|_A\right)$ . Firstly, for $a,b \in A$, if $a \SwD_\Pi b$, then $a \SwD_{\Pi(A)} b$ by definition. So, we easily get $\left(\SwD_\Pi|_A\right) \subseteq \left(\SwD_{\Pi(A)}\right)$.

In the other direction, let $a,b \in A$ such that $a \SwD_{\Pi(A)} b$. Let $C$ be an arbitrary set of alternatives containing $a$ and $b$. From what we have shown above, we have $\left(\SwD_{\Pi(A)}|_{\{a,b\}}\right) = \left(\SwD_{\Pi(\{a,b\})}\right)$. Also, $\left(\SwD_{\Pi(C)}|_{\{a,b\}}\right) = \left(\SwD_{\Pi(\{a,b\})}\right)$. This gives us $\left(\SwD_{\Pi(A)}|_{\{a,b\}}\right) = \left(\SwD_{\Pi(C)}|_{\{a,b\}}\right)$. Hence, $a \SwD_{\Pi(C)} b$, and this is true for every such subset $C$. We conclude that $a \SwD_\Pi b$, that is, $\left(\SwD_{\Pi(A)}\right) \subseteq \left(\SwD_\Pi|_A\right)$.
\end{proof}

\begin{lemma} \label{lem:strong-apply}
Let $f$ be a strongly SwD-efficient anonymous SCC, and let $\Pi$ be a consistent permutation process that is SwD-compatible. Then for any finite subset of alternatives $A$, $f(\Pi(A)) = \{a \in A: a \SwD_\Pi b \text{ \ for all } b \in A\}$.
\end{lemma}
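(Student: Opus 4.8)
The plan is to prove the two set inclusions separately, after first translating the right-hand side into the language of the single profile $\pi := \Pi(A)$. Write $W := \{a \in A : a \SwD_\Pi b \text{ for all } b \in A\}$ for the target set. The opening move is to invoke Lemma~\ref{lem:swap-across-subsets}, which gives $\left(\SwD_{\Pi(A)}\right) = \left(\SwD_\Pi|_A\right)$; this lets me replace $\SwD_\Pi$ (restricted to $A$) by the profile-level relation $\SwD_\pi$ throughout, so that $W$ becomes exactly the set of maximal elements of the total preorder $\SwD_\pi$ on $A$ (totality and transitivity coming from SwD-compatibility of $\Pi$). Because the preorder is total, the elements of $W$ are mutually swap-dominant, while every $a \in W$ satisfies $a \SwD_\pi b$ and $b \notSwD_\pi a$ for each $b \in A \setminus W$.

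For the inclusion $f(\Pi(A)) \subseteq W$, I would take any $a \in f(\pi)$ and suppose toward a contradiction that $a \notin W$. Then there is some $b \in A$ with $a \notSwD_\pi b$, and totality forces $b \SwD_\pi a$. Applying Condition~1 of strong SwD-efficiency (Definition~\ref{def:strong-swd}) to the pair $(b,a)$\,---\,which satisfies $b \SwD_\pi a$ and $a \notSwD_\pi b$\,---\,yields $a \notin f(\pi)$, the desired contradiction. This direction is short and is precisely where Condition~1 does the work it was designed for.

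For the reverse inclusion $W \subseteq f(\Pi(A))$, I would first apply Theorem~\ref{thm:apply-rule-proc} (legitimate since a strongly SwD-efficient SCC is SwD-efficient, and $\Pi$ is SwD-compatible) to obtain a single top alternative $a^* \in W$ with $a^* \in f(\pi)$. The remaining task is to propagate membership from $a^*$ to every other element of $W$. For an arbitrary $a' \in W$, both $a^* \SwD_\pi a'$ and $a' \SwD_\pi a^*$ hold, so Condition~2 of strong SwD-efficiency gives $a' \in f(\pi) \Leftrightarrow a^* \in f(\pi)$; since $a^* \in f(\pi)$, we conclude $a' \in f(\pi)$. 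Ranging over all $a' \in W$ completes this inclusion, and combining the two gives the claimed equality.

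I expect the main obstacle to be the reverse inclusion, specifically the handling of ties at the top: Theorem~\ref{thm:apply-rule-proc} only guarantees that \emph{some} maximal alternative is selected, whereas the lemma claims that \emph{all} of them are. The key realization is that $W$ forms an equivalence class under the symmetric part of $\SwD_\pi$, and that Condition~2\,---\,which behaves like a local neutrality statement between mutually swap-dominant alternatives\,---\,is exactly what transfers selection across this class. The points requiring care are applying the two conditions of Definition~\ref{def:strong-swd} with the alternatives in the correct roles, and ensuring the bridge from Lemma~\ref{lem:swap-across-subsets} is installed before identifying $\SwD_\Pi|_A$ with the profile-level relation $\SwD_\pi$.
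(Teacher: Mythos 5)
Your proof is correct, and its overall skeleton matches the paper's: both prove the two inclusions separately, both obtain $f(\Pi(A)) \subseteq \{a \in A : a \SwD_\Pi b \text{ for all } b \in A\}$ by exactly the same contradiction argument (totality of $\SwD_\Pi$, the bridge $\left(\SwD_{\Pi(A)}\right) = \left(\SwD_\Pi|_A\right)$ from Lemma~\ref{lem:swap-across-subsets}, then Condition~1 of Definition~\ref{def:strong-swd}), and both lean on Theorem~\ref{thm:apply-rule-proc} for the reverse inclusion. The one genuine difference is how you handle that reverse inclusion. The paper cites Theorem~\ref{thm:apply-rule-proc} as giving the entire containment $f(\Pi(A)) \supseteq \{a \in A : a \SwD_\Pi b \text{ for all } b \in A\}$ in one stroke; this is justified only because the \emph{proof} of that theorem actually shows that any alternative dominating all of $A$ is selected, even though the theorem's \emph{statement} asserts this for a single existentially quantified alternative. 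You noticed precisely this gap between statement and proof, and closed it differently: you use the theorem only in its literal (weak) form to obtain one selected maximal alternative $a^*$, and then propagate membership across the whole top equivalence class via Condition~2 of strong SwD-efficiency, observing that any two maximal elements of a total preorder mutually swap-dominate each other. Your route is slightly longer but is robust to the literal reading of Theorem~\ref{thm:apply-rule-proc}, and it is the only one of the two arguments that actually uses Condition~2 in this lemma; the paper's route is shorter but implicitly relies on the stronger content of the theorem's proof. Both are sound.
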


\begin{proof}
Let $A$ be an arbitrary finite subset of alternatives. Since strong SwD-efficiency implies SwD-efficiency, Theorem~\ref{thm:apply-rule-proc} gives us
$$f(\Pi(A)) \supseteq \{a \in A: a \SwD_\Pi b \text{ \ for all } b \in A\}.$$

In the other direction, let $a \in f(\Pi(A))$. Suppose for the sake of contradiction that there exists $b \in A$ such that $a \notSwD_\Pi b$. Since $\SwD_\Pi$ is a total preorder, it follows that $b \SwD_\Pi a$. By Lemma~\ref{lem:swap-across-subsets}, it holds that $\left(\SwD_{\Pi(A)}\right) =  \left(\SwD_\Pi|_A\right)$, and therefore $a \notSwD_{\Pi(A)} b$ and $b \SwD_{\Pi(A)} a$. But, since $f$ is strongly SwD-efficient, it follows that $a \notin f(\Pi(A))$, which contradicts our assumption. Hence,
$$f(\Pi(A)) \subseteq \{a \in A: a \SwD_\Pi b \text{ \ for all } b \in A\},$$
and we have the desired result.
\end{proof}

\begin{theorem}	\label{thm:total-swap-self-cons}
Let $\Pi$ be a consistent permutation process that is SwD-compatible, and let $f$ be a strongly SwD-efficient anonymous SCC. Then the pair $(\Pi, f)$ is stable.
\end{theorem}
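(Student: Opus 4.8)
The plan is to reduce the entire statement to the clean combinatorial characterization of winners supplied by Lemma~\ref{lem:strong-apply}. Since $\Pi$ is consistent and SwD-compatible and $f$ is strongly SwD-efficient, that lemma applies to \emph{every} finite subset, so I would first record its two instances
$$f(\Pi(A)) = \{a \in A: a \SwD_\Pi b \text{ for all } b \in A\} \quad\text{and}\quad f(\Pi(B)) = \{a \in B: a \SwD_\Pi b \text{ for all } b \in B\}.$$
Because $\SwD_\Pi$ is a total preorder, this says precisely that $f(\Pi(A))$ and $f(\Pi(B))$ are the top equivalence classes (sets of maximal elements) of $\SwD_\Pi$ restricted to $A$ and to $B$. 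The theorem thereby becomes a statement purely about maximal elements of a total preorder, with $f$ no longer appearing explicitly.

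Next I would prove the two inclusions of $f(\Pi(A))\cap B = f(\Pi(B))$ separately, under the standing hypothesis that $f(\Pi(A))\cap B$ is nonempty. The easy direction is $f(\Pi(A))\cap B \subseteq f(\Pi(B))$: any $a \in f(\Pi(A))\cap B$ satisfies $a \SwD_\Pi x$ for all $x \in A$, hence in particular for all $x \in B \subseteq A$, so $a$ is maximal in $B$ and lies in $f(\Pi(B))$. This direction does not even use the nonemptiness hypothesis.

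The reverse inclusion $f(\Pi(B)) \subseteq f(\Pi(A))\cap B$ is the only step requiring any thought, and it is where both the nonemptiness hypothesis and transitivity enter. Here I would use $f(\Pi(A))\cap B \neq \phi$ to fix a witness $a \in f(\Pi(A))\cap B$, and take an arbitrary $c \in f(\Pi(B))$. Since $c$ is maximal in $B$ and $a \in B$, we have $c \SwD_\Pi a$; since $a \in f(\Pi(A))$, we have $a \SwD_\Pi x$ for every $x \in A$. Transitivity of the total preorder $\SwD_\Pi$ then yields $c \SwD_\Pi x$ for all $x \in A$, so $c \in f(\Pi(A))$, and as $c \in B$ we conclude $c \in f(\Pi(A))\cap B$. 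Combining the two inclusions gives $f(\Pi(A))\cap B = f(\Pi(B))$, which is exactly stability; since $A$ and $B$ were arbitrary, the pair $(\Pi,f)$ is stable.

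The main obstacle is conceptual rather than technical: the heavy lifting\,---\,passing from the axiomatic properties of $f$ and $\Pi$ to the description of the winning set as the top tier of $\SwD_\Pi$\,---\,is already packaged in Lemmas~\ref{lem:swap-across-subsets} and \ref{lem:strong-apply}. Given that characterization no calculation remains, and the only genuine subtlety is tracking where the hypotheses are used. Strong SwD-efficiency (rather than mere SwD-efficiency) is what makes Lemma~\ref{lem:strong-apply} applicable, and its necessity is witnessed by the plurality rule in Example~\ref{ex:plurality}; the nonemptiness hypothesis $f(\Pi(A))\cap B \neq \phi$, in turn, is exactly what supplies the witness $a$ needed to chain $c \SwD_\Pi a \SwD_\Pi x$ across the larger set $A$. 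Without such a witness, a maximal element of $B$ need not be maximal in $A$, and the reverse inclusion indeed breaks down\,---\,which is why stability is stated conditionally on this intersection being nonempty.
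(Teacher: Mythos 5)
Your proposal is correct and follows essentially the same route as the paper's proof: both reduce the statement to the characterization $f(\Pi(A)) = \{a \in A : a \SwD_\Pi b \text{ for all } b \in A\}$ from Lemma~\ref{lem:strong-apply} and then argue about maximal elements of the total preorder $\SwD_\Pi$, with the forward inclusion identical in both. The only difference is cosmetic: for the reverse inclusion the paper argues by contradiction (deriving $a \notSwD_\Pi d$ from $d \SwD_\Pi c$ and $a \notSwD_\Pi c$), whereas you chain $c \SwD_\Pi a \SwD_\Pi x$ directly through the witness, which is slightly cleaner but the same idea.
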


\begin{proof}
Consider an arbitrary subset of alternatives $A$, and let $B\subseteq A$. By Lemma~\ref{lem:strong-apply}, $f(\Pi(A)) = \{a \in A: a \SwD_\Pi b \text{ \ for all } b \in A\}$, and similarly for $B$. Suppose $f(\Pi(A)) \cap B \neq \phi$, and let $a \in f(\Pi(A)) \cap B$, i.e. $a \in f(\Pi(A))$ and $a \in B$. This means that $a \SwD_\Pi b$ for all $b \in A$, and, therefore $a \SwD_\Pi b$ for all $b \in B$. We conclude that $a \in f(\Pi(B))$, and hence $f(\Pi(A)) \cap B \subseteq f(\Pi(B))$.

In the other direction, let $a \in f(\Pi(B))$. This means that $a \SwD_\Pi b$ for all $b \in B$. Suppose for the sake of contradiction that $a \notin f(\Pi(A))$. This means that there exists $c \in A$ such that $a \notSwD_\Pi c$. We assumed $f(\Pi(A)) \cap B \neq \phi$, so let $d\in f(\Pi(A)) \cap B$. Then, $d \SwD_\Pi c$. In summary, we have $d \SwD_\Pi c$ and $a \notSwD_\Pi c$, which together imply that $a \notSwD_\Pi d$ (otherwise, it would violate transitivity). But $d \in B$, leading to $a \notin f(\Pi(B))$, which contradicts the assumption. Therefore, indeed $a \in f(\Pi(A))$, and it holds that $f(\Pi(B)) \subseteq f(\Pi(A)) \cap B$, as long as $f(\Pi(A)) \cap B \neq \phi$.
\end{proof}

We are now ready to prove Theorem~\ref{thm:stability}.

\begin{proof}[Proof of Theorem~\ref{thm:stability}]
From Lemma~\ref{lem:comm-strong-SwD}, Borda count and Copeland are strongly SwD-efficient. Lemmas~\ref{lem:dom-implies-SwD} and \ref{lem:TM-PL-dom} imply that when $\Pi$ is the TM or PL process, $\SwD_\Pi$ is a total preorder. In particular, $a \SwD_\Pi b$ if $\mu_a \geq \mu_b$. Hence, $\Pi$ is SwD-compatible. Therefore, by Theorem~\ref{thm:total-swap-self-cons}, the pair $(\Pi, f)$ is stable.
\end{proof} 

\section{Instantiation of Our Approach}
\label{sec:instantiation}

In this section, we instantiate our approach for ethical decision making, as outlined in Section~\ref{sec:intro}. In order to present a concrete algorithm, we consider a specific permutation process, namely the TM process with a linear parameterization of the utility process parameters as a function of the alternative features.

Let the set of alternatives be given by $\X \subseteq \mathbb{R}^d$, i.e. each alternative is represented by a vector of $d$ features. Furthermore, let $N$ denote the total number of voters. Assume for now that the data-collection step (Step I) is complete, i.e., we have some pairwise comparisons for each voter; we will revisit this step in Section~\ref{sec:evaluation}.

\medskip
\noindent\textbf{Step II: Learning.} For each voter, we learn a TM process using his pairwise comparisons to represent his preferences. We assume that the mode utility of an alternative $x$ depends linearly on its features, i.e., $\mu_x = \myvec{\beta}^\intercal x$. Note that we do not need an intercept term, since we care only about the relative ordering of utilities. Also note that the parameter $\myvec{\beta} \in \mathbb{R}^d$ completely describes the TM process, and hence the parameters $\myvec{\beta}_1, \myvec{\beta}_2, \cdots \myvec{\beta}_N$ completely describe the models of all voters.

Next we provide a computationally efficient method for learning the parameter $\myvec{\beta}$ for a particular voter. Let $(X_1, Z_1), (X_2, Z_2), \cdots, (X_n, Z_n)$ denote the pairwise comparison data of the voter. Specifically, the ordered pair $(X_j, Z_j)$ denotes the $j^{th}$ pair of alternatives compared by the voter, and the fact that the voter chose $X_j$ over $Z_j$. We use maximum likelihood estimation to estimate $\myvec{\beta}$. The log-likelihood function is
\begin{align*}
\mathcal{L}(\myvec{\beta})& = \log\left[\prod_{j=1}^n P(X_j \succ Z_j; \myvec{\beta})\right]\\
&= \sum_{j=1}^n \log P(U_{X_j} > U_{Z_j}; \myvec{\beta})\\
& = \sum_{j=1}^n \log \Phi\left(\myvec{\beta}^\intercal(X_j - Z_j)\right),
\end{align*}
where $\Phi$ is the cumulative distribution function of the standard normal distribution, and the last transition holds because $U_x \sim \mathcal{N}(\myvec{\beta}^\intercal x, \frac{1}{2})$. Note that the standard normal CDF $\Phi$ is a log-concave function. This makes the log-likelihood concave in $\myvec{\beta}$, hence we can maximize it efficiently.

\medskip
\noindent\textbf{Step III: Summarization.} After completing Step II, we have $N$ TM processes represented by the parameters $\myvec{\beta}_1, \myvec{\beta}_2, \cdots \myvec{\beta}_N$. In Step III, we bundle these individual models into a single permutation process $\hat{\Pi}$, which, in the current instantiation, is also a TM process with parameter $\hat{\myvec{\beta}}$ (see Section~\ref{sec:disc} for a discussion of this point). We perform this step because we must be able to make decisions \emph{fast}, in Step IV. For example, in the autonomous vehicle domain, the AI would only have a split second to make a decision in case of emergency; aggregating information from millions of voters \emph{in real time} will not do. By contrast, Step III is performed offline, and provides the basis for fast aggregation. 

Let $\Pi^{\myvec{\beta}}$ denote the TM process with parameter ${\myvec{\beta}}$. Given a finite subset of alternatives $A\subseteq \X$, the anonymous preference profile generated by the model of voter $i$ is given by $\Pi^{{\myvec{\beta}}_i}(A)$. Ideally, we would like the summary model to be such that the profile generated by it, $\hat{\Pi}(A)$, is as close as possible to $\Pi^*(A) = \frac{1}{N} \sum_{i=1}^N \Pi^{{\myvec{\beta}}_i}(A)$, the mean profile obtained by giving equal importance to each voter. However, there does not appear to be a straightforward method to compute the ``best'' $\hat{\myvec{\beta}}$, since the profiles generated by the TM processes do not have an explicit form. Hence, we use utilities as a proxy for the quality of $\hat{\myvec{\beta}}$. Specifically, we find $\hat{\myvec{\beta}}$ such that the summary model induces utilities that are as close as possible to the mean of the utilities induced by the per-voter models, i.e., we want $U^{\hat{\myvec{\beta}}}_x$ to be as close as possible (in terms of KL divergence) to $\frac{1}{N} \sum_{i=1}^N U^{\myvec{\beta_i}}_x$ for each $x \in \X$, where $U^{\myvec{\beta}}_x$ denotes the utility of $x$ under TM process with parameter $\myvec{\beta}$. This is achieved by taking $\hat{\myvec{\beta}} = \frac{1}{N}\sum_{i=1}^N \myvec{\beta}_i$, as shown by the following proposition.

\begin{proposition}
\label{prop:KL}
The vector $\myvec{\beta} = \frac{1}{N}\sum_{i=1}^N \myvec{\beta}_i$ minimizes $KL\left(\frac{1}{N} \sum_{i=1}^N U^{\myvec{\beta}_i}_x \big\| U^{\myvec{\beta}}_x\right)$ for any $x \in \X$.
\end{proposition}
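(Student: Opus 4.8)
The plan is to observe that, for a fixed alternative $x$, the objective depends on $\myvec{\beta}$ only through the scalar mean $m := \myvec{\beta}^\intercal x$ of the target Gaussian, and then to invoke the standard fact that a Gaussian is closest in KL divergence to a fixed distribution exactly when its mean matches. Write $P$ for the mixture $\frac{1}{N}\sum_{i=1}^N U^{\myvec{\beta}_i}_x$, with density $p(u)=\frac{1}{N}\sum_{i=1}^N \phi_{\myvec{\beta}_i}(u)$ where $\phi_{\myvec{\beta}_i}$ is the density of $\mathcal{N}(\myvec{\beta}_i^\intercal x,\tfrac12)$ (this mixture reading is the utility-level analog of the mean profile $\Pi^*$), and write $Q=U^{\myvec{\beta}}_x=\mathcal{N}(\myvec{\beta}^\intercal x,\tfrac12)$ with density $q(u)=\frac{1}{\sqrt{\pi}}\exp\!\left(-(u-\myvec{\beta}^\intercal x)^2\right)$.

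First I would split $KL(P\,\|\,Q)=\int p\log p - \int p\log q$ and note that the entropy term $\int p\log p$ does not involve $\myvec{\beta}$. Hence minimizing the KL over $\myvec{\beta}$ is equivalent to maximizing the cross-entropy $\int p(u)\log q(u)\,du$. Substituting the explicit Gaussian form of $q$ (variance $\tfrac12$) reduces this to minimizing $\E_P\!\left[(U-\myvec{\beta}^\intercal x)^2\right]$ over the scalar $m=\myvec{\beta}^\intercal x$.

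The next step is the elementary decomposition $\E_P[(U-m)^2]=\mathrm{Var}_P(U)+(\E_P[U]-m)^2$, which is minimized precisely at $m=\E_P[U]$. Since $P$ is a mixture, $\E_P[U]=\frac{1}{N}\sum_{i=1}^N \myvec{\beta}_i^\intercal x=\bar{\myvec{\beta}}^\intercal x$ where $\bar{\myvec{\beta}}=\frac{1}{N}\sum_{i=1}^N\myvec{\beta}_i$, so the optimal mean is $\bar{\myvec{\beta}}^\intercal x$ and the choice $\myvec{\beta}=\bar{\myvec{\beta}}$ attains it. I would remark that if one instead reads $\frac{1}{N}\sum_i U^{\myvec{\beta}_i}_x$ as an average of independent random variables, then $P=\mathcal{N}(\bar{\myvec{\beta}}^\intercal x,\tfrac{1}{2N})$ is itself a single Gaussian; the computation is identical because only the mean of $Q$ enters the $\myvec{\beta}$-dependent term, so the conclusion is unchanged.

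The one point that deserves care\,---\,and the only real obstacle\,---\,is that for a single fixed $x$ the minimizer in $\myvec{\beta}$ is not unique: every $\myvec{\beta}$ on the hyperplane $\{\myvec{\beta}:\myvec{\beta}^\intercal x=\bar{\myvec{\beta}}^\intercal x\}$ attains the minimum. The content of the proposition is therefore the universality claim, that the single vector $\bar{\myvec{\beta}}$ lies on the optimal hyperplane for every $x\in\X$ simultaneously, and so is a minimizer ``for any $x$''. I would close by stating this explicitly, emphasizing that while no single $x$ forces $\bar{\myvec{\beta}}$, it is the natural canonical choice that is optimal across all alternatives at once.
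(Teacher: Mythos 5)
Your proof is correct, and its core engine is the same as the paper's: split $KL(P\,\|\,Q)$ into entropy minus cross-entropy, discard the entropy term because it is free of $\myvec{\beta}$, and observe that the cross-entropy against a Gaussian of fixed variance $\tfrac12$ is a quadratic in the mean $m=\myvec{\beta}^\intercal x$, minimized by mean-matching. The differences are worth recording. The paper reads $\frac{1}{N}\sum_i U^{\myvec{\beta}_i}_x$ literally as the average of $N$ independent Gaussian random variables, so that $P=\mathcal{N}(\bar{\myvec{\beta}}^\intercal x,\tfrac{1}{2N})$ is itself a single Gaussian, and then evaluates $\int p_x(t)\,(t-\myvec{\beta}^\intercal x)^2\,dt$ explicitly from the first two moments of that Gaussian, arriving at $\tfrac12\log(\pi)+\tfrac{1}{2N}+(\bar{\myvec{\beta}}^\intercal x-\myvec{\beta}^\intercal x)^2$. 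You instead work primarily with the mixture reading (arguably closer to the stated motivation of approximating the mean profile $\Pi^*$), and your decomposition $\E_P\!\left[(U-m)^2\right]=\mathrm{Var}_P(U)+\left(\E_P[U]-m\right)^2$ uses only the mean of $P$, not its Gaussian form; this is what makes your remark that both readings yield the same minimizer rigorous, since the mixture and the averaged random variable share the mean $\bar{\myvec{\beta}}^\intercal x$ and the variance never enters the $\myvec{\beta}$-dependent term. Your closing observation\,---\,that for a single fixed $x$ the minimizers form the whole hyperplane $\{\myvec{\beta}:\myvec{\beta}^\intercal x=\bar{\myvec{\beta}}^\intercal x\}$, so the proposition's content is that $\bar{\myvec{\beta}}$ lies on all of these hyperplanes simultaneously\,---\,is left implicit in the paper's phrasing ``for any $x$'' and is a worthwhile clarification, not a gap.
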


\begin{proof}
Let $\bar{\myvec{\beta}} = \frac{1}{N} \sum_{i=1}^N \myvec{\beta}_i$. We know that $U_x^{\myvec{\beta}}$ denotes the utility of $x$ under the TM process with parameter $\myvec{\beta}$. So, $U^{\myvec{\beta}}_x \sim \mathcal{N}(\myvec{\beta}^\intercal x, \frac{1}{2})$. Let its density be given by $q_{x, \myvec{\beta}}(\cdot)$. Also, $U_x^{\myvec{\beta}_i} \sim \mathcal{N}(\myvec{\beta}_i^\intercal x, \frac{1}{2})$. Hence, $$\frac{1}{N} \sum_{i=1}^N U^{\myvec{\beta}_i}_x \sim \mathcal{N}(\bar{\myvec{\beta}}^\intercal x, \frac{1}{2N}).$$ Let its density function be denoted by $p_x(\cdot)$. Then
$$KL(p_x \| q_{x,\myvec{\beta}}) = \int p_x(t) \log p_x(t) dt - \int p_x(t) \log q_{x, \myvec{\beta}}(t) dt.$$
Since the first term does not depend on $\myvec{\beta}$, let us examine the second term:
\begin{align*}
- \int p_x(t) \log q_{x,\myvec{\beta}}(t) dt
&= - \int p_x(t) \log\left( \frac{1}{\sqrt{\pi}} \exp\left(-(t - \myvec{\beta}^\intercal x)^2\right) \right) dt\\
&= - \int p_x(t) \left[-\frac{1}{2}\log(\pi) - (t-\myvec{\beta}^\intercal x)^2 \right] dt\\
&= \frac{1}{2}\log(\pi) \left(\int p_x(t) dt\right) + \int p_x(t) \left(t^2 + (\myvec{\beta}^\intercal x)^2 -2 t \myvec{\beta}^\intercal x \right) dt\\
&= \frac{1}{2}\log(\pi) + \left( \int t^2 p_x(t) dt + (\myvec{\beta}^\intercal x)^2 \int p_x(t) dt -2 \myvec{\beta}^\intercal x \int t p_x(t) dt \right)\\
&= \frac{1}{2}\log(\pi) + \left( \left(\frac{1}{2N} + (\bar{\myvec{\beta}}^\intercal x)^2\right) + (\myvec{\beta}^\intercal x)^2 -2 \myvec{\beta}^\intercal x (\bar{\myvec{\beta}}^\intercal x) \right)\\
&= \frac{1}{2}\log(\pi) + \frac{1}{2N} + \left(\bar{\myvec{\beta}}^\intercal x - \myvec{\beta}^\intercal x\right)^2.
\end{align*}
This term is minimized at $\myvec{\beta} = \bar{\myvec{\beta}}$ for any $x$, and therefore $KL(\frac{1}{N} \sum_{i=1}^N U^{\myvec{\beta}_i}_x \big\| U^{\myvec{\beta}}_x)$ is minimized at that value as well.
\end{proof}

\medskip
\noindent\textbf{Step IV: Aggregation.} As a result of Step III, we have exactly one (summary) TM process $\hat{\Pi}$ (with parameter $\hat{\myvec{\beta}} = \bar{\myvec{\beta}}$) to work with at runtime. Given a finite set of alternatives $A=\{x_1, x_2, \cdots, x_m\}$, we must aggregate the preferences represented by the anonymous preference profile $\hat{\Pi}(A)$. This is where the machinery of Section~\ref{sec:aggregation} comes in: We simply need to select an alternative that has maximum mode utility among $\hat{\myvec{\beta}}^\intercal x_1, \hat{\myvec{\beta}}^\intercal x_2, \cdots, \hat{\myvec{\beta}}^\intercal x_m$. Such an alternative would be selected by any anonymous SCC that is monotonic and neutral, when applied to $\hat{\Pi}(A)$, as shown by Theorem~\ref{thm:main}. Moreover, this aggregation method is equivalent to applying the Borda count or Copeland SCCs. Hence, we also have the desired stability property, as shown by Theorem~\ref{thm:stability}.

\section{Implementation and Evaluation}
\label{sec:evaluation}

In this section, we implement the algorithm presented in Section~\ref{sec:instantiation}, and empirically evaluate it. We start with an implementation on synthetic data, which allows us to effectively validate both Steps~II and III of our approach. We then describe the Moral Machine dataset mentioned in Section~\ref{sec:intro}, present the implementation of our algorithm on this dataset, and evaluate the resultant system for ethical decision making in the autonomous vehicle domain (focusing on Step~III).

\subsection{Synthetic Data}
\label{subsec:synth}

\medskip
\noindent\textbf{Setup.} We represent the preferences of each voter using a TM process. Let $\myvec{\beta}_i$ denote the true parameter corresponding to the model of voter $i$. We sample $\myvec{\beta}_i$ from $\mathcal{N}(\myvec{m}, I_d)$ (independently for each voter $i$), where each mean $m_j$ is sampled independently from the uniform distribution $\mathcal{U}(-1,1)$, and the number of features is $d=10$. 

In each instance (defined by a subset of alternatives $A$ with $|A| = 5$), the desired winner is given by the application of Borda count to the mean of the profiles of the voters. In more detail, we compute the anonymous preference profile of each voter $\Pi^{\myvec{\beta}_i}(A)$, and then take a mean across all the voters to obtain the desired profile $\frac{1}{N} \sum_{i=1}^N \Pi^{\myvec{\beta}_i}(A)$. We then apply Borda count to this profile to obtain the winner. Note that, since we are dealing with TM processes, we cannot explicitly construct $\Pi^{\myvec{\beta}_i}(A)$; we therefore estimate it by sampling rankings according to the TM process of voter $i$. 

%
\medskip
\noindent\textbf{Evaluation of Step II (Learning).} In practice, the algorithm does not have access to the true parameter $\myvec{\beta}_i$ of voter $i$, but only to pairwise comparisons, from which we learn the parameters. Thus we compare the computation of the winner (following the approach described above) using the true parameters, and using the learned parameters as in Step II. We report the accuracy as the fraction of instances, out of $100$ test instances, in which the two outcomes match. 

To generate each pairwise comparison of voter $i$, for each of $N=20$ voters, we first sample two alternatives $x_1$ and $x_2$ independently from $\mathcal{N}(\myvec{0}, I_d)$. Then, we sample their utilities $U_{x_1}$ and $U_{x_2}$ from $\mathcal{N}({\myvec{\beta}}_i^\intercal x_1, \frac{1}{2})$ and $\mathcal{N}({\myvec{\beta}}_i^\intercal x_2, \frac{1}{2})$, respectively. Of course, the voter prefers the alternative with higher sampled utility. Once we have the comparisons, we learn the parameter $\myvec{\beta}_i$ by computing the MLE (as explained in Step II of Section~\ref{sec:instantiation}). 
%
In our results, we vary the number of pairwise comparisons per voter and compute the accuracy to obtain the learning curve shown in Figure~\ref{fig:first}. Each datapoint in the graph is averaged over $50$ runs. Observe that the accuracy quickly increases as the number of pairwise comparisons increases, and with just $30$ pairwise comparisons we achieve an accuracy of $84.3\%$. With $100$ pairwise comparisons, the accuracy is $92.4\%$.

\medskip
\noindent\textbf{Evaluation of Step III (Summarization).} 
To evaluate Step III, we assume that we have access to the true parameters $\myvec{\beta}_i$, and wish to determine the accuracy loss incurred in the summarization step, where we summarize the individual TM models into a single TM model. As described in Section~\ref{sec:instantiation}, we compute $\bar{\myvec{\beta}} = \frac{1}{N} \sum_{i=1}^N \myvec{\beta}_i$, and, given a subset $A$ (which again has cardinality 5), we aggregate using Step IV, since we now have just one TM process. For each instance, we contrast our computed winner with the desired winner as computed previously.
We vary the number of voters and compute the accuracy to obtain Figure~\ref{fig:second}. The accuracies are averaged over $50$ runs. Observe that the accuracy increases to $93.9\%$ as the number of voters increases. In practice we expect to have access to thousands, even millions, of votes (see Section~\ref{subsec:moral}). We conclude that, surprisingly, the expected loss in accuracy due to summarization is quite small.

\medskip
\noindent\textbf{Robustness.} Our results are robust to the choice of parameters, as we demonstrate in Appendix~\ref{app:experiments}.

\begin{figure}[t]
\centering
\begin{minipage}{0.5\textwidth}
  \centering
  \includegraphics[width=.9\linewidth]{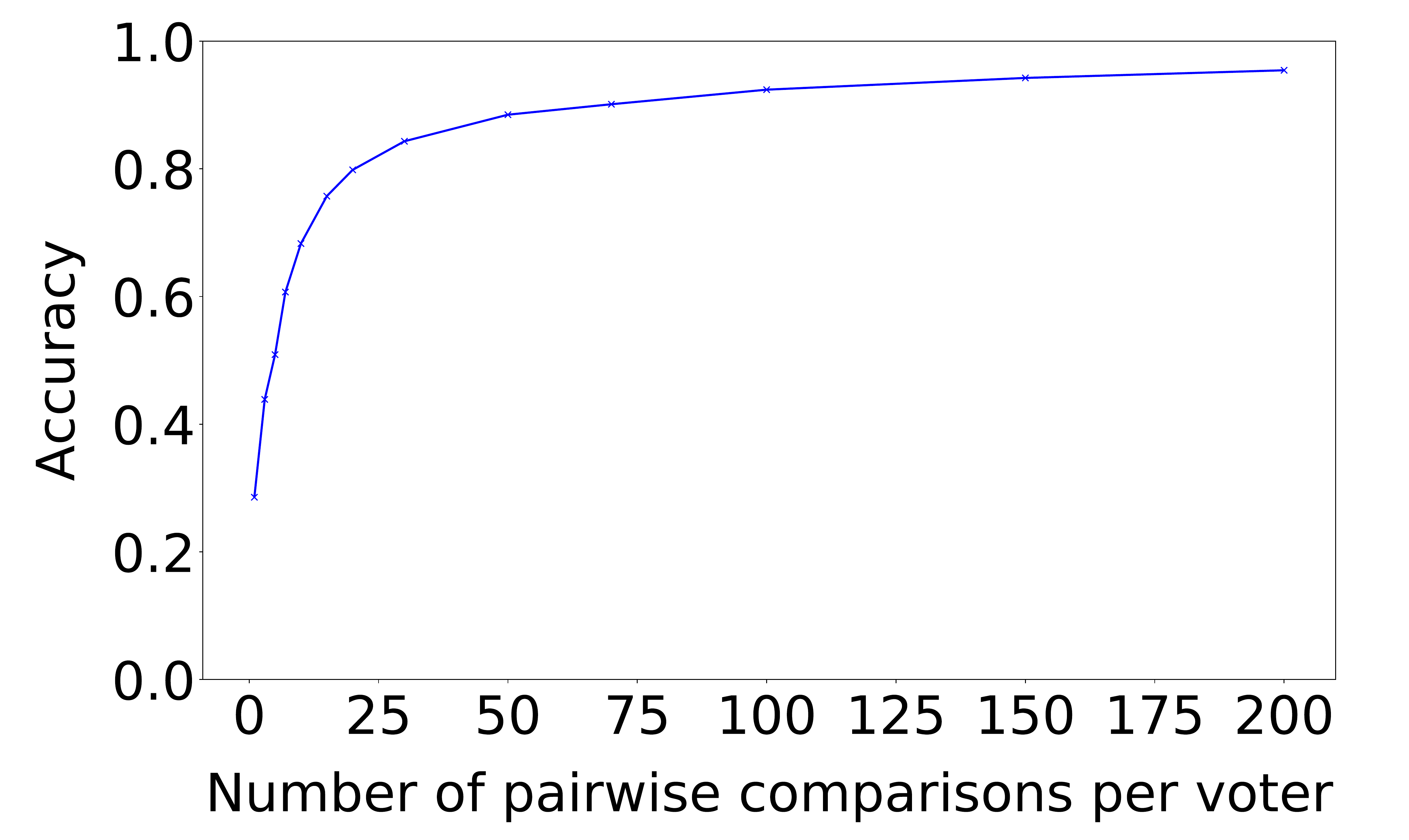}
  \captionof{figure}{Accuracy of Step II (synthetic data) }
  \label{fig:first}
\end{minipage}%
\begin{minipage}{0.5\textwidth}
  \centering
  \includegraphics[width=.9\linewidth]{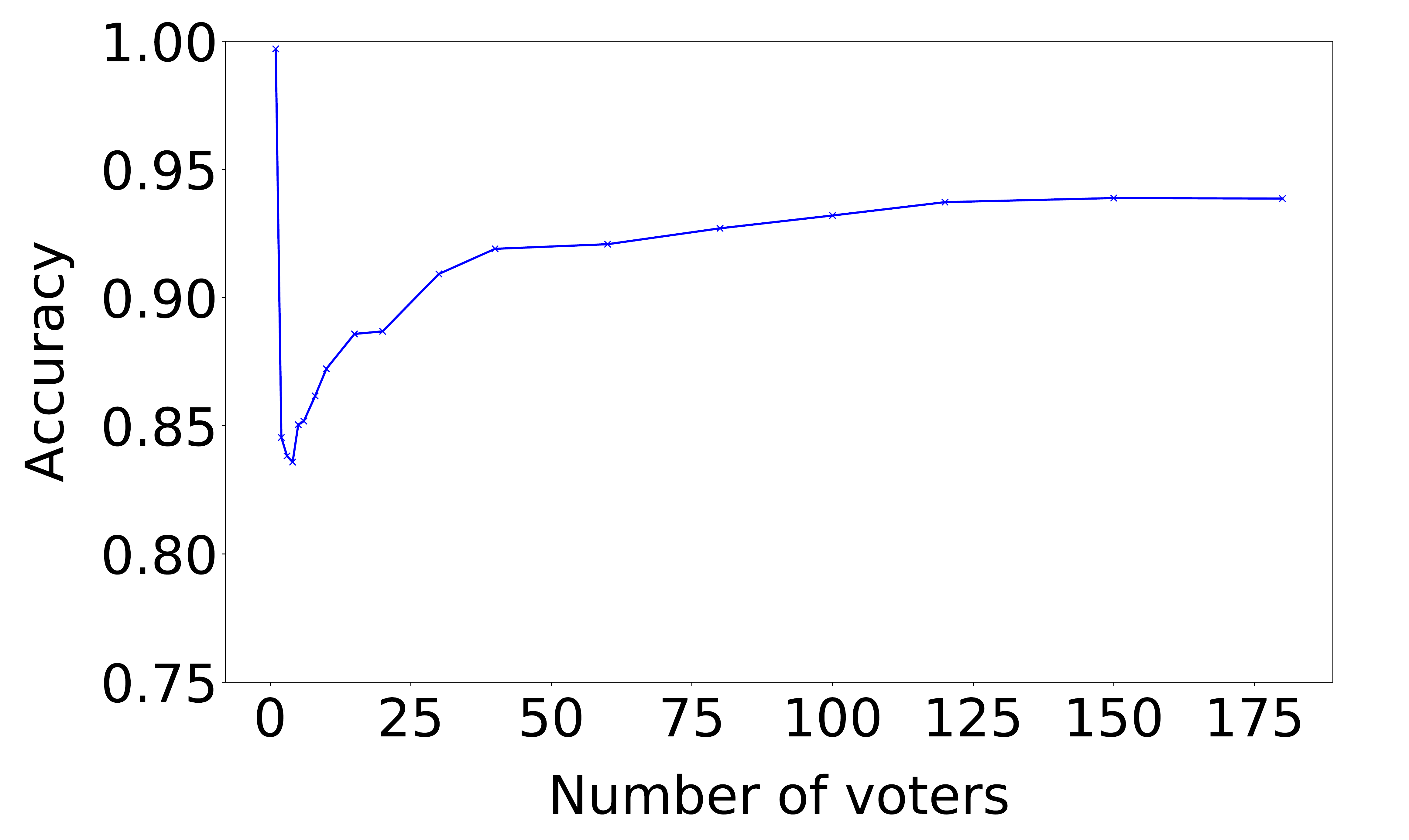}
  \captionof{figure}{Accuracy of Step III (synthetic data)}
  \label{fig:second}
\end{minipage}
\end{figure}

\subsection{Moral Machine Data}
\label{subsec:moral}

Moral Machine is a platform for gathering data on human perception of the moral acceptability of decisions made by autonomous vehicles faced with choosing which humans to harm and which to save. The main interface of Moral Machine is the Judge mode. This interface generates sessions of random moral dilemmas. In each session, a user is faced with 13 instances. Each instance features an autonomous vehicle with a brake failure, facing a moral dilemma with two possible alternatives, that is, each instance is a pairwise comparison. Each of the two alternatives corresponds to sacrificing the lives of one group of characters to spare those of another group of characters. Figure \ref{fig:MM} shows an example of such an instance. Respondents choose the outcome that they prefer the autonomous vehicle to make. 


\begin{figure}[t]
\centering
\includegraphics[width=0.5\textwidth]{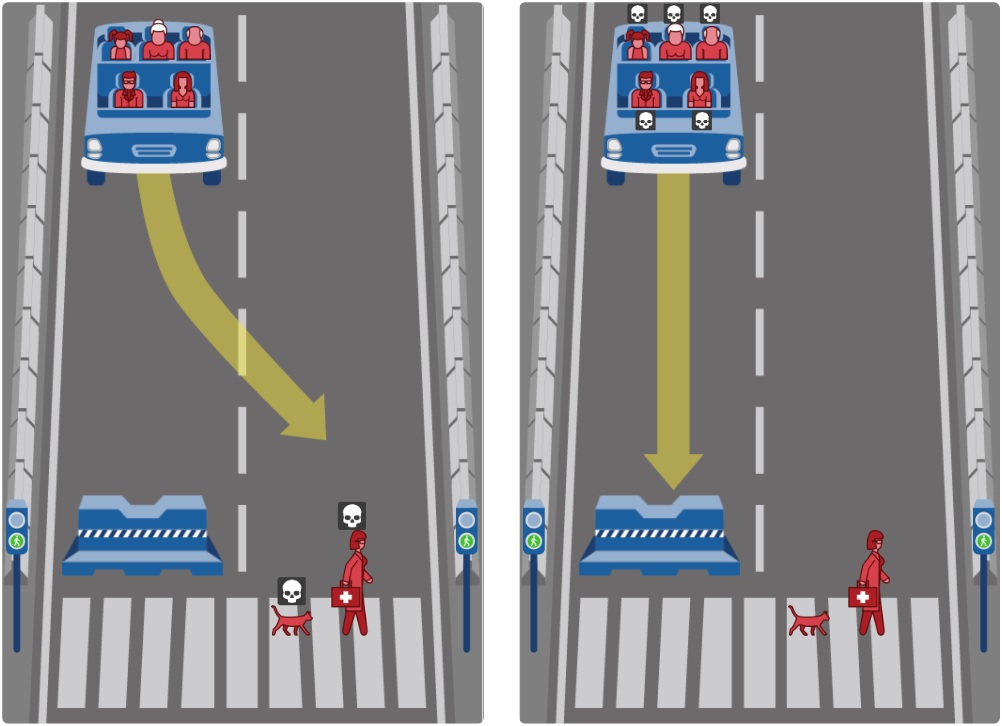}
\caption{\emph{Moral Machine}\,---\,Judge interface. This particular choice is between a group of pedestrians that includes a female doctor and a cat crossing on a green light, and a group of passengers including a woman, a male executive, an elderly man, an elderly woman, and a girl.}
\label{fig:MM}
\end{figure} 

Each alternative is characterized by 22 features: relation to the autonomous vehicle (passengers or pedestrians), legality (no legality, explicitly legal crossing, or explicitly illegal crossing), and counts of 20 character types, including ones like man, woman, pregnant woman, male athlete, female doctor, dog, etc. When sampling from the 20 characters, some instances are generated to have an easy-to-interpret tradeoff with respect to some dimension, such as gender (males on one side vs. females on the other), age (elderly vs. young), fitness (large vs. fit), etc., while other instances have groups consisting of completely randomized characters being sacrificed in either alternative. Alternatives with all possible combinations of these features are considered, except for the legality feature in cases when passengers are sacrificed. In addition, each alternative has a derived feature, ``number of characters,'' which is simply the sum of counts of the $20$ character types (making $d=23$).

As mentioned in Section~\ref{sec:intro}, the Moral Machine dataset consists of preference data from 1,303,778 voters, amounting to a total of 18,254,285 pairwise comparisons. We used this dataset to learn the $\myvec{\beta}$ parameters of all 1.3 million voters (Step II, as given in Section~\ref{sec:instantiation}). Next, we took the mean of all of these $\myvec{\beta}$ vectors to obtain $\hat{\myvec{\beta}}$ (Step III). This gave us an implemented system, which can be used to make real-time choices between any finite subset of alternatives. 

Importantly, the methodology we used, in Section~\ref{subsec:synth}, to evaluate Step II on the synthetic data cannot be applied to the Moral Machine data, because we do not know which alternative would be selected by aggregating the preferences of the actual 1.3 million voters over a subset of alternatives. However, we can apply a methodology similar to that of Section~\ref{subsec:synth} in order to evaluate Step III. 
Specifically, as in Section~\ref{subsec:synth}, we wish to compare the winner obtained using the summarized model, with the winner obtained by applying Borda count to the mean of the anonymous preference profiles of the voters. 

An obstacle is that now we have a total of $1.3$ million voters, and hence it would take an extremely long time to calculate the anonymous preference profile of each voter and take their mean (this was the motivation for having Step III in the first place). So, instead, we estimate the mean profile by sampling rankings, i.e., we sample a voter $i$ uniformly at random, and then sample a ranking from the TM process of voter $i$; such a sampled ranking is an i.i.d.~sample from the mean anonymous profile. Then, we apply Borda count as before to obtain the desired winner (note that this approach is still too expensive to use in real time). The winner according to the summarized model is computed exactly as before, and is just as efficient even with $1.3$ million voters. 

Using this methodology, we computed accuracy on $3000$ test instances, i.e., the fraction of instances in which the two winners match. Figure~\ref{fig:StepIII_MMdata} shows the results as the number of alternatives per instance is increased from $2$ to $10$. Observe that the accuracy is as high as $98.2\%$ at $2$ alternatives per instance, and gracefully degrades to $95.1\%$ at $10$.

\begin{figure}[t]
  \centering
  \includegraphics[width=0.5\textwidth]{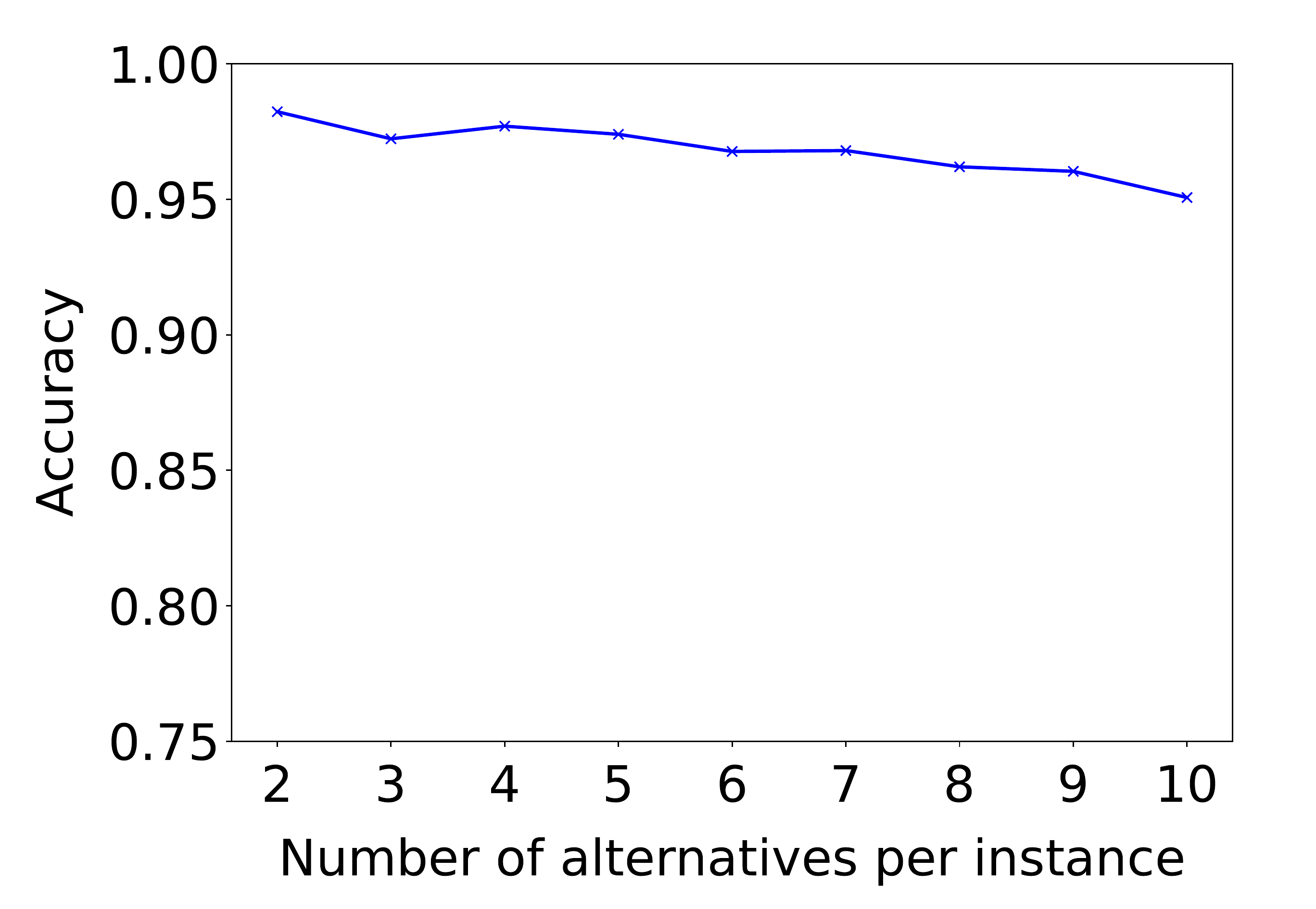}
  \caption{Accuracy of Step III (Moral Machine data)}
  \label{fig:StepIII_MMdata}
\end{figure}

\section{Discussion}
\label{sec:disc}

The design of intelligent machines that can make ethical decisions is, arguably, one of the hardest challenges in AI. We do believe that our approach takes a significant step towards addressing this challenge. In particular, the implementation of our algorithm on the Moral Machine dataset has yielded a system which, arguably, can make \emph{credible} decisions on ethical dilemmas in the autonomous vehicle domain (when all other options have failed). But this paper is clearly not the end-all solution.

\subsection{Limitations}

While the work we presented has some significant limitations, we view at least some of them as shortcomings of the current (proof-of-concept) implementation, rather than being inherent to the approach itself, as we explain below. 

First, Moral Machine users may be poorly informed about the dilemmas at hand, or may not spend enough time thinking through the options, potentially leading\,---\,in some cases\,---\,to inconsistent answers and poor models. We believe, though, that much of this noise cancels out in Steps III and IV. 

In this context, it is important to note that some of us have been working with colleagues on an application of the approach presented here to food allocation~\cite{LKKK+18}. In this implementation\,---\,which is a collaboration with 412 Food Rescue, a Pittsburgh-based nonprofit\,---\,the set of alternatives includes hundreds of organizations (such as food pantries) that can receive incoming food donations. The voters in this implementation are a few dozen \emph{stakeholders}: representatives of donor and recipient organizations, volunteers (who deliver the donation from the donor to the recipient), and employees of 412 Food Rescue. These voters are obviously well informed, and the results of \citet{LKKK+18} indicate that they have been exceptionally thoughtful in providing their answers to pairwise comparisons.

Second, our dataset contains roughly 14 pairwise comparisons per voter on average. As suggested by Figure~\ref{fig:first}, this may not be sufficient for learning truly accurate voter models. However, subsequent work by \citet{KKAA+18} indicates that this problem can be alleviated by assuming that the parameters that determine the preferences of individual voters are drawn from a common distribution. This correlates the individual voter models, and, intuitively, allows the millions of examples to contribute to learning each and every model. Results based on the Moral Machine dataset indeed show that this technique leads to increased accuracy in predicting pairwise comparisons. In addition, in the work of \citet{LKKK+18}, many voters answered as many as 100 pairwise comparison queries, leading to strikingly accurate voter models that predict pairwise comparisons with roughly 90\% accuracy. 

Third, the choice of features in the Moral Machine dataset may be contentious. On the one hand, should we really take into account things like gender and profession to determine who lives and who dies? On the other hand, the set of alternatives is too coarse, in that it does not include information about probabilities and degrees of harm. As discussed by \citet{CSSD+17}, feature selection is likely to be a major issue for any machine-learning-based approach to ethical decision making. 

\subsection{Extensions}

Going forward, most important is the (primarily conceptual) challenge of extending our framework to incorporate ethical or legal principles\,---\,at least for simpler settings where they might be easier to specify. The significant advantage of having our approach in place is that these principles do not need to always lead to a decision, as we can fall back on the societal choice. This allows for a modular design where principles are incorporated over time, without compromising the ability to make a decision in every situation. 

In addition, as mentioned in Section~\ref{sec:instantiation}, we have made some specific choices to instantiate our approach. We discuss two of the most consequential choices. First, we assume that the mode utilities have a linear structure. This means that, under the TM model, the estimation of the maximum likelihood parameters is a convex program (see Section~\ref{sec:instantiation}), hence we can learn the preferences of millions of voters, as in the Moral Machine dataset. Moreover, a straightforward summarization method works well. However, dealing with a richer representation for utilities would require new methods for both learning and summarization (Steps II and III).

Second, the instantiation given in Section~\ref{sec:instantiation} summarizes the $N$ individual TM models as a single TM model. While the empirical results of Section~\ref{sec:evaluation} suggest that this method is quite accurate, even higher accuracy can potentially be achieved by summarizing the $N$ models as a \emph{mixture} of $K$ models, for a relatively small $K$. This leads to two technical challenges: What is a good algorithm for generating this mixture of, say, TM models? And, since the framework of Section~\ref{sec:aggregation} would not apply, how should such a mixture be aggregated\,---\,does the (apparently mild) increase in accuracy come at great cost to computational efficiency?

\section*{Acknowledgments}

This work was partially supported by NSF grants IIS-1350598, IIS-1714140, IIS-1149803, CCF-1525932, and CCF-1733556; by ONR grants N00014-16-1-3075 and N00014-17-1-2428; by two Sloan Research Fellowships and a Guggenheim Fellowship; and by the Ethics \& Governance of AI Fund. 


\appendix

\section{Robustness of the Empirical Results}
\label{app:experiments}

In Section~\ref{subsec:synth}, we presented experiments using synthetic data, with the following parameters: each instance has $5$ alternatives, the number of features is $d=10$, and, in Step II, we let number of voters be $N=20$. In this appendix, to demonstrate the robustness of both steps, we show experimental results for different values of these parameters (keeping everything else fixed).

\begin{figure}[b]
  \centering
  \includegraphics[width=0.5\textwidth]{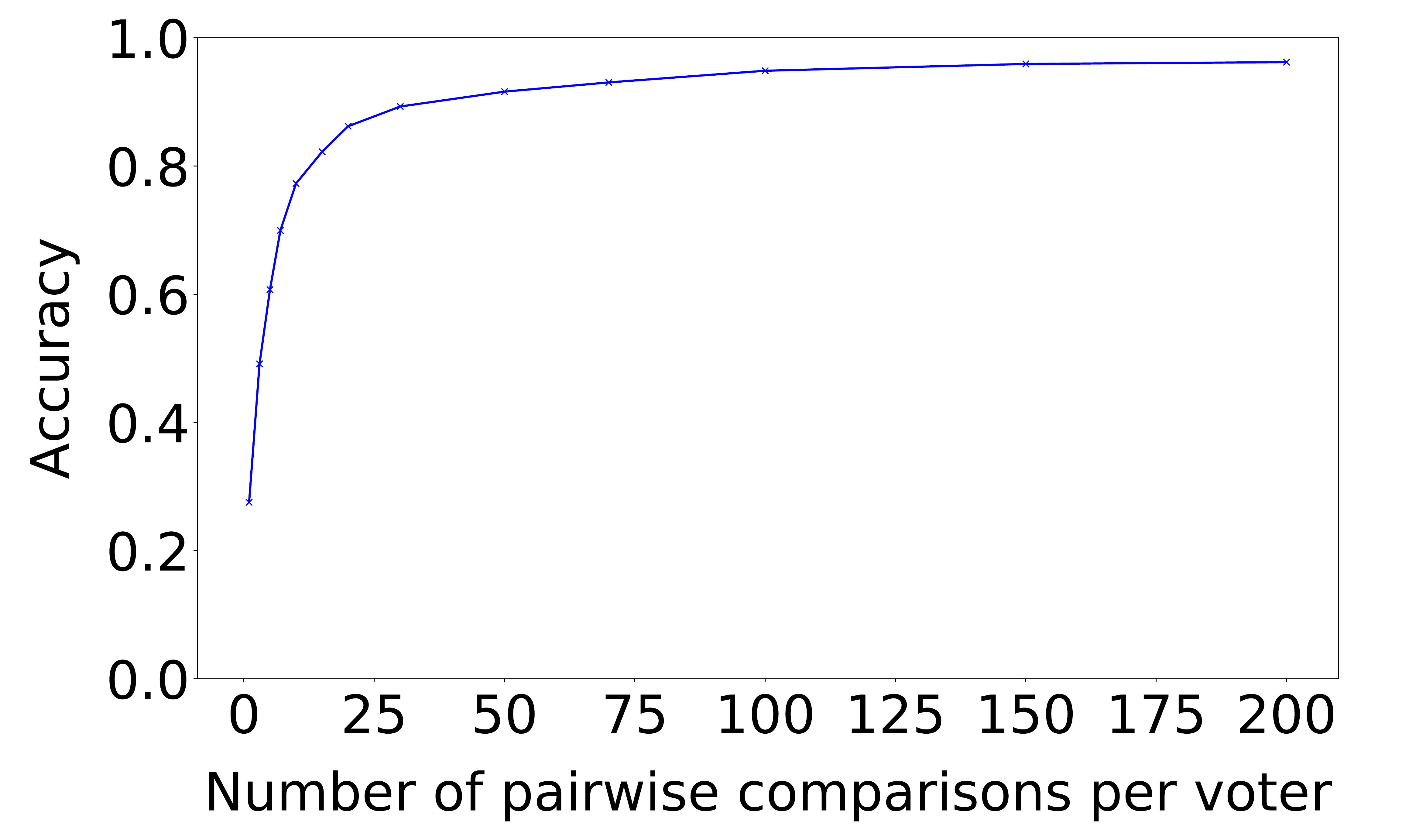}
  \caption{Accuracy of Step II with number of voters $N = 40$ (synthetic data)}
  \label{fig:40voters}
\end{figure}

\subsection{Number of Voters in Step II}

To show robustness with respect to the number of voters $N$ in Step II, we run the Step II experiments with $40$ (instead of $N=20$). The results are shown in Figure~\ref{fig:40voters}.

As before, we observe that the accuracy quickly increases as the number of pairwise comparisons increases, and with just $30$ pairwise comparisons we achieve an accuracy of $89.3\%$. With $100$ pairwise comparisons, the accuracy is $94.9\%$.

\begin{figure}[t]
\centering
\begin{subfigure}{.5\textwidth}
  \centering
  \includegraphics[width=\textwidth]{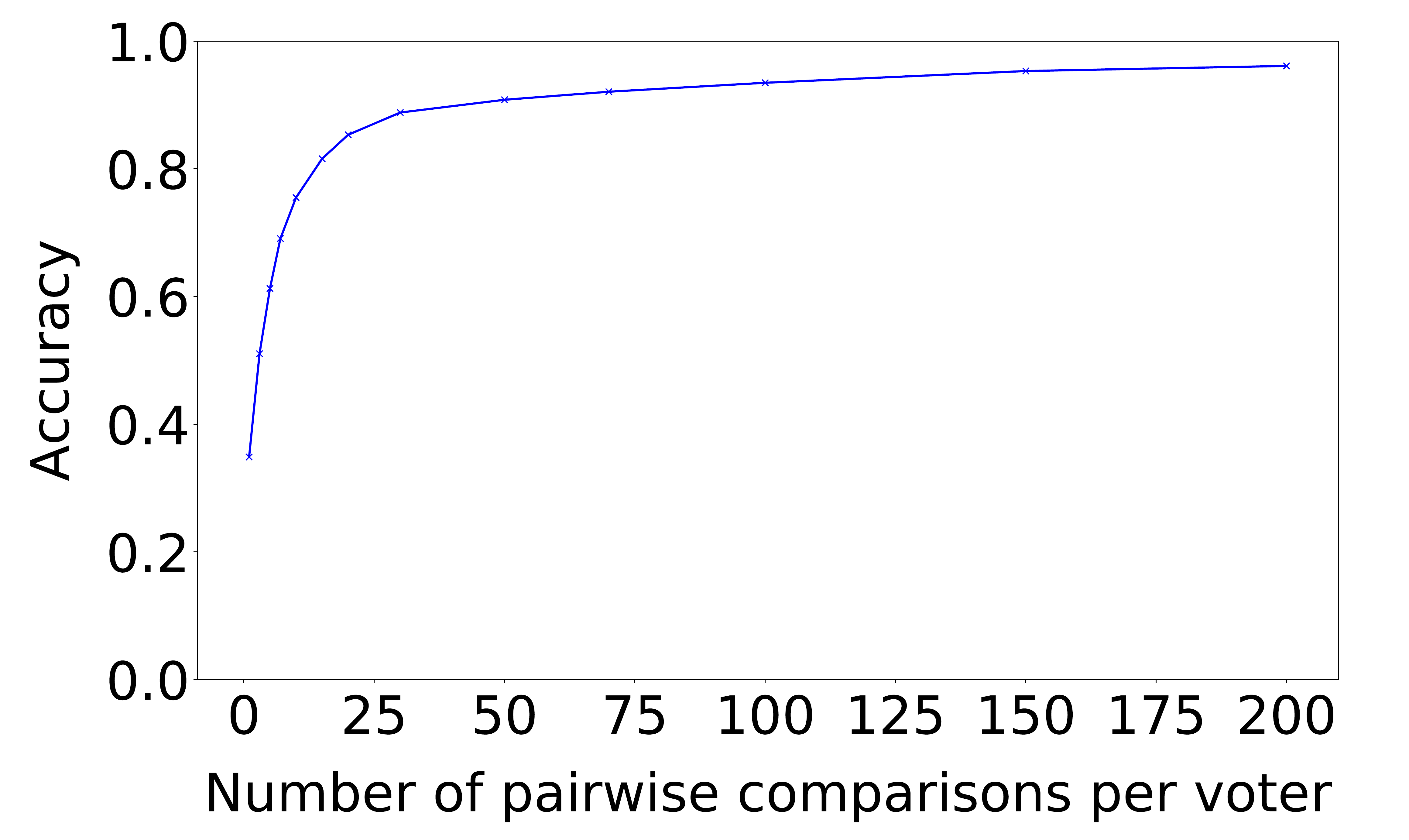}
  \caption{Accuracy of Step II}
\end{subfigure}%
\begin{subfigure}{.5\textwidth}
  \centering
  \includegraphics[width=\textwidth]{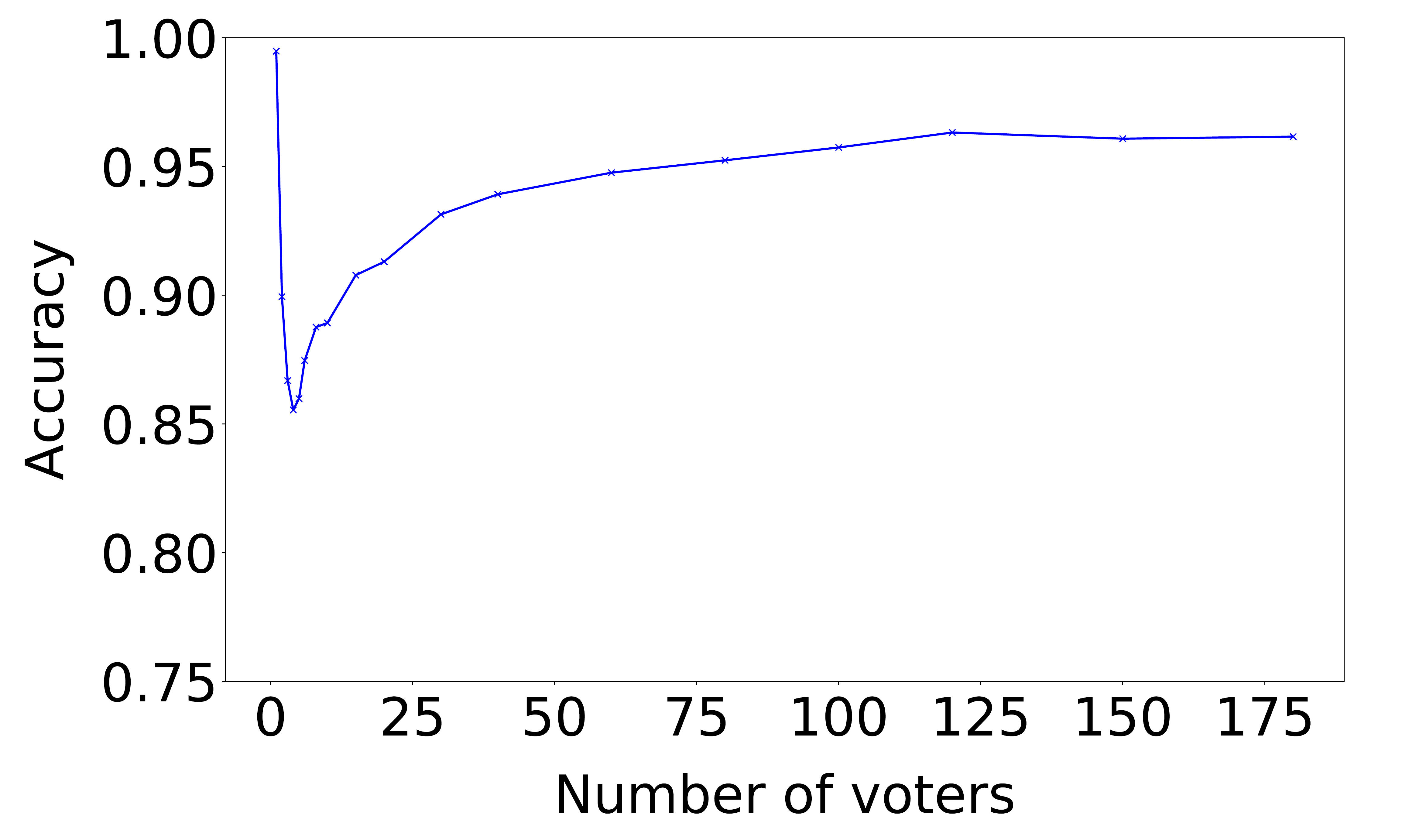}
  \caption{Accuracy of Step III}
\end{subfigure}
\caption{Results with $3$ alternatives per instance (synthetic data)}
\label{fig:3alts}
\end{figure}

\begin{figure}[t]
\centering
\begin{subfigure}{.5\textwidth}
  \centering
  \includegraphics[width=\textwidth]{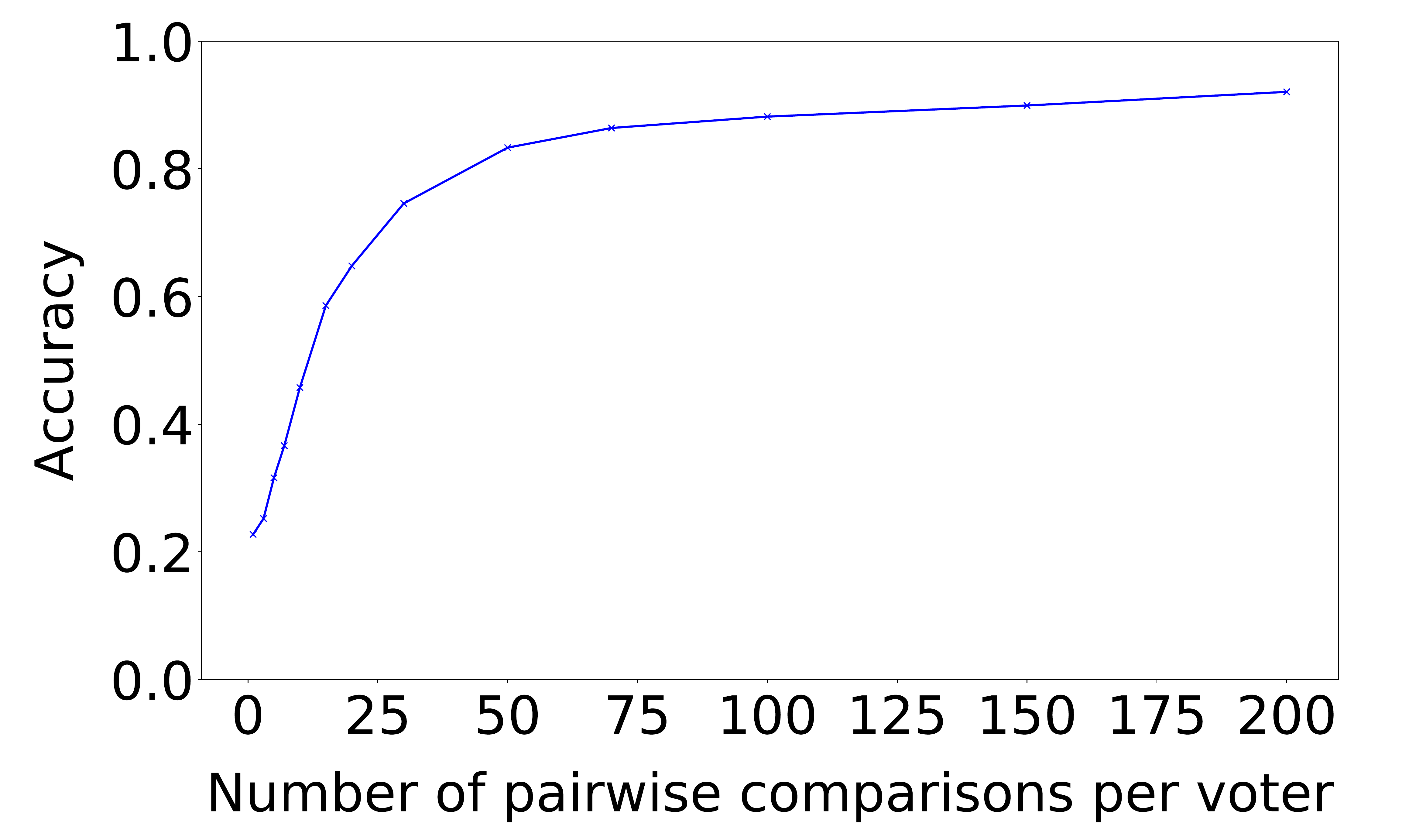}
  \caption{Accuracy of Step II}
\end{subfigure}%
\begin{subfigure}{.5\textwidth}
  \centering
  \includegraphics[width=\textwidth]{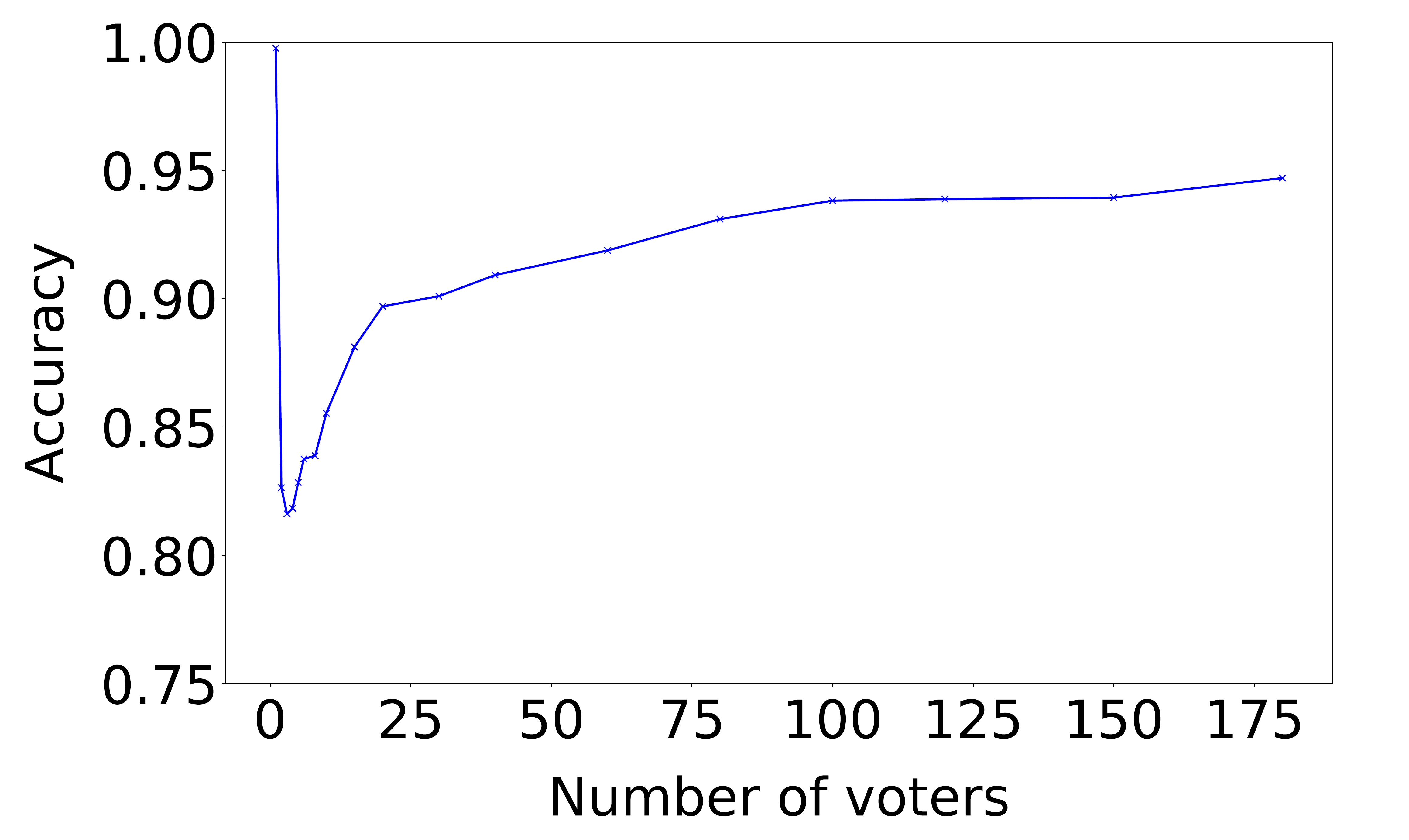}
  \caption{Accuracy of Step III}
\end{subfigure}
\caption{Results with number of features $d = 20$ (synthetic data)}
\label{fig:20features}
\end{figure}

\subsection{Number of Alternatives}

To show robustness with respect to the number of alternatives, we run experiments with $|A|=3$ (instead of $|A|=5$). The results are shown in Figure~\ref{fig:3alts}.

Similarly to Section~\ref{subsec:synth}, for Step II, we observe that the accuracy quickly increases as the number of pairwise comparisons increases, and with just $30$ pairwise comparisons we achieve an accuracy of $88.8\%$. With $100$ pairwise comparisons, the accuracy is $93.5\%$. For Step III, we observe that the accuracy increases to $96.2\%$ as the number of voters increases.

\subsection{Number of Features}

To show robustness with respect to the number of features $d$, we run experiments with $d=20$ (instead of $d=10$). The results are shown in Figure~\ref{fig:20features}.

Again, for Step II, we observe that the accuracy quickly increases (though slower than in Section~\ref{subsec:synth}, because of higher dimension) as the number of pairwise comparisons increases. With just $30$ pairwise comparisons we achieve an accuracy of $74.6\%$, and with $100$ pairwise comparisons, the accuracy is $88.2\%$. For Step III, we observe that the accuracy increases to $94.7\%$ as the number of voters increases.

\end{document}